\algnewcommand\algorithmicinput{\textbf{Input:}}
\algnewcommand\INPUT{\item[\algorithmicinput]}
\algnewcommand\algorithmicoutput{\textbf{Output:}}
\algnewcommand\OUTPUT{\item[\algorithmicoutput]}
\renewcommand{\eqref}[1]{Eq.~(\ref{eq:#1})}
\newcommand{\tabref}[1]{Table~\ref{tab:#1}}        
\newcommand{\secref}[1]{Section~\ref{sec:#1}}
\newcommand{\thmref}[1]{Theorem~\ref{thm:#1}}
\newcommand{\lemref}[1]{Lemma~\ref{lem:#1}}
\newcommand{\appref}[1]{Appendix \ref{app:#1}}
\renewcommand{\P}{\mathbb{P}}
\newcommand{\E}{\mathbb{E}}
\newcommand{\VS}{\mathrm{VS}}
\newcommand{\dom}{\mathrm{dom}}
\newcommand{\reals}{\mathbb{R}}
\newcommand{\nats}{\mathbb{N}}
\newcommand{\one}{\mathbbm{1}}
\newcommand{\cmark}{\ding{51}}%
\newcommand{\xmark}{\ding{55}}%
\newcommand{\cH}{\mathcal{H}}
\renewcommand{\pmod}{p'}
\title[Adaptive Combinatorial Maximization: 
  Beyond Approximate Greedy Policies]{Adaptive Combinatorial Maximization:\\
  Beyond Approximate Greedy Policies}
\begin{document}

\maketitle
\begin{abstract}
We study adaptive combinatorial maximization, which is a core challenge in machine learning, with applications in active learning as well as many other domains. We study the Bayesian setting, and consider the objectives of maximization under a cardinality constraint and minimum cost coverage. 
We provide new comprehensive approximation guarantees that subsume previous results, as well as considerably strengthen them. Our approximation guarantees simultaneously support the maximal gain ratio as well as near-submodular utility functions, and include both maximization under a cardinality constraint and a minimum cost coverage guarantee. In addition, we provided an approximation guarantee for a modified prior, which is crucial for obtaining active learning guarantees that do not depend on the smallest probability in the prior.
Moreover, we discover a new parameter of adaptive selection policies, which we term the \emph{maximal gain ratio}. We show that this parameter is strictly less restrictive than the greedy approximation parameter that has been used in previous approximation guarantees, and show that it can be used to provide stronger approximation guarantees than previous results. In particular, we show that the maximal gain ratio is never larger than the greedy approximation factor of a policy, and that it can be considerably smaller. This provides a new insight into the properties that make a policy useful for adaptive combinatorial maximization.
\end{abstract}

\begin{keywords}
  Combinatorial maximization, adaptive submodularity, approximate greedy policies
\end{keywords}

\section{Introduction}
\label{sec:intro}
Adaptive combinatorial maximization is a core challenge in machine learning, spanning a wide range of applications, including the fundamental challenges of active learning \citep{golovin2011adaptive} and adaptive experiment design \citep{doppa2021adaptive}, as well as many other applications, such as influence maximization in social networks \citep{seeman2013adaptive,tong2016adaptive}, movie recommendations \citep{mitrovic2019adaptive} and spectroscopy \citep{hino2020active}.

In this algorithmic problem, there are elements with hidden states. Elements are selected sequentially. Whenever an element is selected, its state is observed. Past observations can be used to make the next selection decision. The obtained utility from a specific selection process depends on the selected elements and their true states. For instance, consider a problem of selecting locations for placing radio towers \citep{asadpour2008stochastic,golovin2011adaptive}, where the goal is to ensure sufficient signal strength in the designated area. The states in this case are parameters of the transmission that can only be measured after a radio tower is placed. The next radio tower location can be selected based on the previous measurements.

A particularly important application of adaptive combinatorial application in machine learning is active learning \citep{mccallum1998employing}, in which the elements are examples, and the state of an element is its true label. The active learning algorithm selects which element to have labeled, so as to reveal its true label, and the objective is to identify the true classifier for the entire set of examples using a small number of labels.

We study the Bayesian setting, in which there is a known prior over the possible states of the elements, and the performance of the algorithm is measured with respect to the expectation over this prior. The goal of the algorithm is either to obtain a high expected utility using a limited number of element selections, also termed ``maximization under a cardinality constraint'' or alternatively, to obtain the maximal possible utility value using a small expected number of element selections. The latter is typically referred to as ``minimum cost coverage''. This objective is of particular importance for active learning, since in this application, the goal is to reach a specific target of finding the correct classifier, which can be mapped to a maximal value of the utility function. \cite{golovin2011adaptive} were the first to introduce the adaptive Bayesian formulation.

Calculating an optimal selection policy for either of the objectives mentioned above is computationally hard in the general case \citep[see, e.g.][]{NemhauserWoFi78,Wolsey82}. Thus, we seek instead to provide approximation guarantees with respect to an optimal policy. \cite{golovin2011adaptive} defined a class of utility functions called \emph{adaptive submodular}, which generalizes the non-adaptive notion of submodular set functions \citep{Edmonds70}. They showed that if the functions are also \emph{adaptive monotone}, then an \emph{approximately greedy} policy obtains a bounded approximation factor for the objective of maximization under a cardinality constraint. A greedy policy selects, in each round, the element that will obtain the largest increase in utility in expectation.  An approximate greedy policy may select an element that is maximal up to a given factor. \cite{golovinKrause17arxiv} provided an approximation guarantee for the minimum cost coverage objective. A smaller approximation factor was proved by \cite{esfandiari2021adaptivity}.

The guarantees mentioned above require that the utility function is adaptive submodular. \cite{fujii2019beyond} studied a relaxed version, using the \emph{adaptive submodularity ratio}, a property that quantifies the closeness of a utility function to being submodular. They provided an approximation guarantee for maximization under a cardinality constraint for greedy policies, that is parameterized by the adaptive submodularity ratio of the utility function. 

In this work, we provide a new perspective on the problem of adaptive combinatorial maximization, by providing comprehensive approximation guarantees that not only subsume previous results, but also considerably strengthen them.
We discover a new parameter of adaptive selection policies, which we term the \emph{maximal gain ratio}. We show that this parameter is strictly less restrictive than the greedy approximation parameter that has been used in previous approximation guarantees, and show that it can be used to provide stronger approximation guarantees than previous results. In particular, we show that the maximal gain ratio is never larger than the greedy approximation factor of a policy, and that it can be considerably smaller. 

The maximal gain ratio is the maximal ratio between the expected marginal gain of remaining elements at termination and the expected marginal gain of selected elements. In this way, it can take into account also selections that would seem out of order for a greedy policy, but that accomplish a similar goal. Moreover, we show that even for greedy policies, the value of the maximal gain ratio can be arbitrarily small. Thus, this provides a more refined understanding of what makes an adaptive selection policy successful. Our contributions are the following:
\begin{itemize}
\item Defining the maximal gain ratio, a new policy parameter;
\item Showing that this parameter is strictly less restrictive than the greedy approximation parameter;
\item An approximation guarantee for utility maximization under a cardinality constraint that is parameterized by the maximal gain ratio of a policy as well as the adaptive submodularity ratio of the utility function;
\item An approximation guarantee for the minimum cost coverage objective parameterized by the same properties;
\item A version of the minimum cost coverage guarantee that allows using the problem parameters with a modified prior first suggested by \cite{golovin2011adaptive}, leading to an improved approximation ratio for Bayesian active learning under priors with small probabilities. 
\end{itemize}
We summarize the main difference between our work and previous works mentioned above in \tabref{Comparison}. Our work is the first to support, at the same time, policies that are not greedy, utility functions that are not adaptive submodular, and the minimum cost coverage objective, as well as guarantees for the modified prior. 

\begin{table}[h]
\footnotesize
    \centering
      \begin{tabular}{cccccc}
        \toprule
         & Approximate & Nearly adaptive & Min cost & Support for & Beyond\\
         & greedy policies & submodular utility & coverage & modified prior & approximate greedy\\
        \midrule
        GK11 & \cmark\ & \xmark\ & (\cmark) & (\cmark) & \xmark\ \\
        FS19 & \xmark\ & \cmark\ & \xmark\ & \xmark\ & \xmark\ \\
        EKM21 & \xmark\ & \xmark\ & \cmark\ & \xmark\ & \xmark\ \\
        \midrule
        \textbf{This Work} & \cmark\ & \cmark\ & \cmark\ & \cmark\ & \cmark\ \\
        \bottomrule
    \end{tabular}
      \caption{Comparison to previous works. GK11: \cite{golovin2011adaptive}, FS19: \cite{fujii2019beyond}, EKM21: \cite{esfandiari2021adaptivity}. (\cmark) indicates a looser guarantee. }
    \label{tab:Comparison}
\end{table}

\paragraph{Paper structure.} In \secref{Related Work}, we complement the discussion above on related work. Definitions and necessary background are provided in \secref{Preliminaries}. We provide a summary of our main results in \secref{main}. The maximal gain ratio is presented and analyzed in \secref{beta}. The proofs of the approximation guarantees for the true prior based on the maximal gain ratio are provided in \appref{New guarantees for nearly adaptive submodular functions}. The modified prior and its relevance to active learning are discussed in \secref{modified}, where the proof of our guarantee for this prior is also provided, with some parts deferred to \appref{modified}. We summarize in \secref{Conclusions}. 

\section{Related Work}
\label{sec:Related Work}
The concept of adaptive submodular functions was first introduced in \cite{golovin2011adaptive}. They showed that this class of functions allows deriving guarantees for several settings, including adaptive submodular maximization with respect to a Bayesian prior. The latter setting was further studied in \cite{fujii2019beyond} and \cite{esfandiari2021adaptivity}, who provided additional guarantees, as discussed above. \cite{fujii2019beyond}  introduced the adaptive submodularity ratio, which generalized the submodularity ratio proposed by \cite{das2011submodular} for non-adaptive set functions to the adaptive setting, and provided guarantees parameterized by this quantity. The notion of \emph{curvature}, first introduced in \cite{conforti1984submodular} for set functions in the non-adaptive combinatorial maximization setting, allows further improving the guarantees and algorithms for submodular maximization under a cardinality constraint \citep{vondrak10,balkanski2018approximation}. 
  
Adaptive submodular maximization has been studied in other settings as well, such as a worst-case non-Bayesian setting with pointwise submodularity \citep{guillory2010interactive} with applications to active learning \citep{CuongLeYe14}, outcome dependent costs \citep{Sabato18}, adaptive maximization of non-monotone submodular functions \citep{gotovos2015non,amanatidis2020fast, TANG21}, streaming adaptive maximization \citep{tang2023streaming}, and robust adaptive submodular maximization \cite{tang2022robust}.
It has been applied to many settings, including batch mode active learning \citep{chen2013near}, bandit algorithms \citep{gabillon2013adaptive}, and reinforcement learning \citep{wu2022adaptive}.

\section{Preliminaries}
\label{sec:Preliminaries}
In this section, we give necessary preliminaries. Definitions and notation are provided in \secref{Definitions and Notation}. Previously studied properties of utility functions and policies are provided in \secref{Properties of utility functions and policies}. Guarantees from previous works are given in \secref{Previous approximation guarantees}.

\subsection{Definitions and Notation}
\label{sec:Definitions and Notation}
Let $V$ be a finite ground set of elements.
Let $Y$ be a finite set of the possible states of each of the elements in $V$.
A \emph{realization} is a map $\phi:V \to Y$ that associates each ground element with a state.
A \emph{partial realization} $\psi: V \rightharpoonup Y$ is a map from a subset of $V$ to $Y$. Its domain is denoted by $\dom(\psi)$.
We denote the set of all realizations by $\Phi_V$ and the set of all partial realizations by $\Psi_V$.
For any realization $\phi \in \Phi_V$ and partial realization $\psi \in \Psi_V$, we write $\phi \sim \psi$ to indicate that for every $x \in \dom(\psi)$, $\psi(x) = \phi(x)$.
For any two partial realizations $\psi$ and $\psi'$, we write $\psi \subseteq \psi'$ if $\dom(\psi) \subseteq \dom(\psi')$ and $\forall x \in \dom(\psi)$, $\psi(x) = \psi'(x)$.

We assume that a true unknown realization $\phi \in \Phi_V$ is drawn according to a known prior over $\Phi_V$. This prior is denoted by $p$ unless explicitly stated otherwise. An adaptive algorithm interactively selects elements and observes their states: At time $t$, an element $x_t \in V$ is selected and its state $\phi(x_t)$ is observed. The previous observations at time $t$ are represented by the partial realization $\psi_t \subseteq \phi$ whose domain is $\{x_1,\ldots,x_t\}$.
The utility of selecting a specific set of elements under a given true realization is measured using a \emph{utility function} $f:2^V \times \Phi_V \to \reals$, which maps each pair of a set of elements and a realization to a real value representing the utility when this set is selected under the given realization. The goal is to obtain a high utility while selecting few elements.

A \emph{policy} $\pi:\Psi_V \to V \cup \{\bot\}$ is a (possibly non-deterministic) map from partial realizations to elements that determines which element to select next, given the past observations represented by the input partial realization. $\pi(\psi) = \bot$ for a given $\psi \in \Psi_V$ indicates that $\pi$ terminates after observing $\psi$. For a policy $\pi$ and a realization $\phi \in \Phi_V$, we denote by $E(\pi, \phi) \subseteq V$ the (possibly random) set of elements that the policy $\pi$ selects through the entire run of the policy under realization $\phi$ until termination. The \emph{height} of a policy is the maximal number of elements it may select, for any true realization and any random bits. For an integer $k$, $\Pi_k$ denotes the set of all policies with height at most $k$.
For any policy $\pi$, a \emph{sub-policy} of $\pi$ is any policy that runs exactly like $\pi$, except that it possibly terminates earlier in some cases. 

The expected utility of a policy under a given prior $p$ for a given utility function $f$ is given by
\[
f_{\mathrm{avg}}(\pi, p) := \E[f(E(\pi, \phi), \phi)]. 
\]
Here and below, the expectation is over the true realization $\phi \sim p$ as well as the randomness of the policy.
The \emph{expected cost} of using a given policy is defined as the expected number of elements that it selects until termination, given by 
\[
c_{\mathrm{avg}}(\pi, p) := \E[|E(\pi, \phi)|].
\]

The \emph{expected marginal gain} of an element $v\in V$ under a given prior $p$ and a utility function $f$ with respect to a partial realization $\psi \in \Psi_V$ is the expected contribution to the utility value if $v$ is selected next, assuming that the elements selected so far and their observed states are as specified by $\psi \in \Psi_V$. Formally, the expected marginal gain is given by 
\[
\Delta^f_p(v\mid\psi) := \E[f(\{v\} \cup \dom(\psi), \phi) - f(\dom(\psi), \phi) \mid \phi \sim \psi].
\]
The expected marginal gain of a policy $\pi$ under $p$ and $f$, assuming the previous observations in $\psi$ is the expected contribution to the utility until termination, is denoted by
\[
\Delta^f_p(\pi \mid \psi) := \E[f(E(\pi, \phi) \cup \dom(\psi), \phi) - f(\dom(\psi), \phi) \mid \phi \sim \psi].
\]

\subsection{Properties of utility functions and policies}
\label{sec:Properties of utility functions and policies}
A utility function $f$ is \emph{adaptive monotone} if selecting an element is never harmful in expectation. Formally, $f$ is adaptive monotone if for any partial realization $\psi$ and any $v\in V\setminus \dom(\psi)$, \mbox{$\Delta^f_p(v\mid \psi) \geq 0$}.
A utility function $f$ is \emph{adaptive submodular} if selecting an element satisfies an expected diminishing-returns property, that is, if selecting the same element later never contributes more in expectation to the utility than selecting the same element earlier in the same run. Formally, a utility function is adaptive submodular if for any two partial realizations $\psi,\psi'$ such that $\psi \subseteq \psi'$ and for any $v\in V\setminus \dom(\psi')$, 
\[
  \Delta^f_p(v\mid \psi) \geq \Delta^f_p(v\mid \psi').
  \]
The \emph{adaptive submodularity ratio} of a utility function quantifies how close it is to being submodular.  For a given prior $p$ and two integers $n,k$, the adaptive submodularity ratio of $f$ is defined as
\[
\gamma^s_{n,k}(f, p) := \min_{\psi \in \Psi_V:|\psi| \leq n,\psi'\subseteq \psi,\pi\in\Pi_k} \frac{\sum_{v\in V} \P_{\Phi \sim p}[v\in E(\pi,\Phi)\mid \Phi\sim \psi']\Delta^f_p (v \mid \psi')}{\Delta^f_p(\pi\mid \psi')}.
\]
The adaptive submodularity ratio $\gamma^s_{n,k}(f, p)$ takes values in $[0,1]$, where a higher value indicates that a function is closer to being adaptive submodular, and a value of 1 indicates that the function is adaptive submodular. 

Previous works have studied \emph{greedy} or \emph{approximate greedy} policies, which, in each round, select an element that maximizes or approximately maximizes the expected marginal gain. For $\alpha \geq 1$, a policy is considered $\alpha$-approximate greedy if for any $\psi \in \Psi_V$ such that $\pi(\psi) \neq \bot$, 
\[
\Delta^f_p(\pi(\psi) \mid \psi) \geq 
\frac{1}{\alpha}  \max_{v\in V} \Delta^f_p(v\mid\psi).
\]
A greedy policy is a policy which is $1$-approximate greedy.
Denote by $\alpha_\pi(f,p)$ the smallest real value $\alpha$ such that $\pi$ is an $\alpha$-approximate greedy policy with respect to $f$ and $p$. We term this the \emph{greedy approximation ratio} of the policy.

\subsection{Previous approximation guarantees}
\label{sec:Previous approximation guarantees}

\cite{golovin2011adaptive} proved an approximation guarantee for the expected utility of an approximate greedy policy $\pi$
of height at most $l$, assuming that $f$ is an adaptive submodular utility function.
They proved that for any policy $\pi^*$ with height $k$, 
\begin{equation}\label{eq:gk11's theorem}
f_{\mathrm{avg}}(\pi,p) > \left(1-\exp(-\frac{l}{\alpha_{\pi}(f,p) \cdot k})\right)f_{\mathrm{avg}}(\pi^*, p).
\end{equation}

\cite{fujii2019beyond} considered the same setting, but allowed also utility functions that are not adaptive submodular, using the adaptive submodularity ratio of the utility function. They required the policy to be exactly greedy, thus not supporting $\alpha$-approximate greedy policies for $\alpha > 1$. They proved that for any policy $\pi^*$ with height $k$,
\begin{equation}\label{eq:fujii theorem}
f_{\mathrm{avg}}(\pi,p) \geq \left(1-\exp(-\frac{\gamma^s_{l,k}(f,p) \cdot l}{k})\right)\cdot f_{\mathrm{avg}}(\pi^*,p).
\end{equation}
This coincides with \eqref{gk11's theorem} for adaptive submodular functions ($\gamma^s_{l,k}(f,p)=1$) and greedy policies ($\alpha = 1$). 

\cite{esfandiari2021adaptivity} provided a guarantee that depends on the average costs of the policies. This contrasts with the results discussed above, which depend on the height of the two policies being compared. They considered only greedy policies and adaptive submodular functions. 
They proved that for a greedy policy $\pi$, if it terminates when the expected marginal gain of all remaining elements is below some fixed threshold, and if its average cost is $l$, then for any policy $\pi^*$, 
\begin{equation}\label{eq:esffavg}
f_{\mathrm{avg}}(\pi, p) > \left(1 - \exp\left(-\frac{l}{c_{\mathrm{avg}}(\pi^*, p)+1}\right)\right) \cdot f_{\mathrm{avg}}(\pi^*, p).
\end{equation}
They further proved an approximation guarantee for the minimum cost coverage
objective. This approximation guarantee holds for functions with a \emph{discrete covering
  property} first proposed by \cite{golovin2011adaptive}. A utility function
$f$ is said to satisfy the discrete covering property with parameters
$Q,\eta > 0$ if there exists a value $\eta \in (0,Q]$ such that for all
$\psi\in \Psi_V$ and $\phi \in \Phi_V$, if $f(\dom(\psi), \phi)>Q-\eta$, then
$f(\dom(\psi), \phi) = Q$.  \cite{esfandiari2021adaptivity} proved that for
any such function, and any greedy policy $\pi$, if $\pi^*$ is a policy that
achieves covering, that is, $f(E(\pi^*, \phi), \phi) = Q$ for all
$\phi \in \Phi_V$, then
\begin{equation}\label{eq:esfcavg}
c_{\mathrm{avg}}(\pi, p) \leq (c_{\mathrm{avg}}(\pi^*, p) + 1)  \log(\frac{|V|Q}{\eta}) + 1.
\end{equation}

This improves a previous bound by \cite{golovinKrause17arxiv}, which had a
squared logarithmic dependence.  \cite{golovin2011adaptive} studied
Generalized Binary Search (GBS), which is a specific strategy for Bayesian
active learning with binary labels, and defined a \emph{coverage utility
  function} for active learning that has the discrete covering property with
parameters $Q=1,\eta = \min_{\phi \in \Phi_V}p(\phi)$.  Plugging these
parameters into \eqref{esfcavg} may be disadvantageous if the minimal
probability assigned by $p$ is very small, due to the term $\log(1/\eta)$ in
the approximation factor. \cite{golovin2011adaptive,golovinKrause17arxiv} defined a modified prior $\pmod \propto \max\{p(\phi), 1/|\Phi_V|^2\}$, and derived an $O(\log^2(|\Phi_V|))$ approximation guarantee based on their squared-logarithmic approximation bound, for a greedy policy that uses a coverage utility function based on this prior.

\section{Main results}\label{sec:main}
In this section, we provide an overview of our main results.
In \secref{beta} below, we define a new policy parameter, which we term the \emph{maximal gain ratio}. This is the maximal ratio between the expected marginal gain of remaining elements at termination and the expected marginal gain of elements selected by the policy.
The maximal gain ratio of a policy $\pi$ given a utility function $f$ and a prior $p$ is denoted by $\beta_\pi(f,p)$. Our approximation guarantees, provided below, are parameterized by $\beta_\pi(f,p)$, and are stronger when $\beta_\pi(f,p)$ is smaller. 
It is therefore desirable to have $\beta_\pi(f,p)$ as small as possible. In \secref{beta}, we prove that the value of $\beta_\pi(f,p)$ is never larger than that of $\alpha_\pi(f,p)$, the greedy approximation ratio of the policy, and that it can actually be significantly smaller. This shows that a guarantee that uses $\beta_\pi(f,p)$ is strictly stronger than the same guarantee with $\alpha_\pi(f,p)$.
We prove the following properties:
\begin{itemize}
\item For all policies $\pi$, utility functions $f$ and priors $p$, $\beta_\pi(f,p) \leq \alpha_\pi(f,p)$.
\item $\beta_\pi$ can have a value of $1$ even if the policy is not greedy. This contrasts with $\alpha_\pi$, which is equal to $1$ if and only if the policy $\pi$ is greedy.
\item $\beta_\pi$ can be arbitrarily close to zero. This is in contrast to $\alpha_\pi$, which is bounded below by $1$.
\end{itemize}

Having defined the maximal gain ratio, we provide new approximation guarantees that use this parameter, instead of the greedy approximation ratio or an assumption that a policy is greedy, as in previous works. In addition, these approximation guarantees support multiple regimes that were previously studied separately: functions that are not necessarily adaptive submodular, policies that are not necessarily greedy, and guarantees for the minimum cost coverage objective for functions with the discrete covering property. 

Our first approximation guarantee concerns utility maximization. 
\begin{theorem}\label{thm:esfandiari favg}
Let $f$ be a non-negative utility function which is adaptive monotone with respect to a prior $p$.
Let $\pi$ be a policy with height $n$ that terminates when the expected marginal gain of all remaining elements is below some fixed threshold and has an average cost of $l$.\footnote{We provide a formal version of this condition in \appref{New guarantees for nearly adaptive submodular functions}.} Let $\pi^*$ be a policy with height $k$. Then
\[
f_{\mathrm{avg}}(\pi, p) \geq \left(1 - \exp\left(-l\bigg/\bigg(\frac{\beta_\pi(f,p)  }{\gamma^s_{n,k}(f,p)}\cdot c_{\mathrm{avg}}(\pi^*, p)+1\bigg)\right)\right) \cdot f_{\mathrm{avg}}(\pi^*, p).
\]
\end{theorem}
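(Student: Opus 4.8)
The plan is to follow the average-cost analysis of \cite{esfandiari2021adaptivity} that yields \eqref{esffavg}, but to replace its two structural ingredients---the exact-greedy per-step inequality and the adaptive submodularity assumption---by bounds driven instead by the maximal gain ratio $\beta_\pi(f,p)$ and the adaptive submodularity ratio $\gamma^s_{n,k}(f,p)$. Following the convention of the cited works, I would take $f(\emptyset,\phi)=0$ for all $\phi$, which leaves every marginal gain $\Delta^f_p(\cdot\mid\cdot)$, as well as $\gamma^s_{n,k}(f,p)$ and $\beta_\pi(f,p)$, unchanged. Writing $f^*:=f_{\mathrm{avg}}(\pi^*,p)$ for the target value, I would track the utility collected by $\pi$ through the truncations $\pi_j$ (the policy $\pi$ stopped after its first $j$ selections), and control the gap $f^*-f_{\mathrm{avg}}(\pi_j,p)$ as expected cost accumulates.

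The core is a progress inequality evaluated at each partial realization $\psi$ that $\pi$ reaches before termination. First, an adaptive-monotone argument in the style of \cite{golovin2011adaptive} bounds the residual value that $\pi^*$ can still contribute from $\psi$, giving $f^*-\E_\psi[f(\dom(\psi),\phi)]\le\E_\psi[\Delta^f_p(\pi^*\mid\psi)]$, the expectation being over the $\psi$ induced by running $\pi$. Next, applying the definition of the adaptive submodularity ratio to $\pi^*\in\Pi_k$ converts this policy-level gain into single-element gains,
\[
\gamma^s_{n,k}(f,p)\,\Delta^f_p(\pi^*\mid\psi)\le\sum_{v\in V}\P[v\in E(\pi^*,\Phi)\mid\Phi\sim\psi]\,\Delta^f_p(v\mid\psi)\le\E[\,|E(\pi^*,\Phi)|\mid\Phi\sim\psi\,]\cdot\max_{v\in V}\Delta^f_p(v\mid\psi),
\]
so that a \emph{conditional} expected cost of $\pi^*$ and the best available marginal gain appear. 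Finally, the maximal gain ratio lets me charge $\max_{v}\Delta^f_p(v\mid\psi)$ against the gains $\pi$ actually realizes: by the definition of $\beta_\pi(f,p)$ (Section~\ref{sec:beta}) the gain of the elements remaining at termination is at most $\beta_\pi(f,p)$ times the gain of the elements that $\pi$ selects, which is precisely the relaxation of the $\alpha$-approximate-greedy condition needed here.

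Composing the three bounds shows that at each reachable $\psi$ the increment made by $\pi$ is at least a $\frac{\gamma^s_{n,k}(f,p)}{\beta_\pi(f,p)}\cdot\E[\,|E(\pi^*,\Phi)|\mid\Phi\sim\psi\,]^{-1}$ fraction of the current gap $f^*-f(\dom(\psi),\phi)$. I would then invoke the cost-accounting of \cite{esfandiari2021adaptivity}: summing these increments weighted by the probability that $\pi$ is still running, and comparing the resulting discrete sum to an exponential via $1-x\le e^{-x}$, turns the recursion on the gap into the stated multiplicative factor. The \emph{average} cost $c_{\mathrm{avg}}(\pi^*,p)$ (rather than the height $k$ of $\pi^*$) emerges because the conditional cost $\E[\,|E(\pi^*,\Phi)|\mid\Phi\sim\psi\,]$ is retained throughout and aggregated, and the additive $1$ in the denominator $\tfrac{\beta_\pi(f,p)}{\gamma^s_{n,k}(f,p)}c_{\mathrm{avg}}(\pi^*,p)+1$ arises from the boundary term of this summation, exactly as the $+1$ does in \eqref{esffavg} and \eqref{esfcavg}. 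Running the truncation to $\pi$'s threshold termination, so that no residual progress is discarded, yields the theorem.

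The main obstacle is the middle step: translating the termination-based, aggregate definition of $\beta_\pi(f,p)$ into a per-step (equivalently, per-unit-of-cost) inequality that can legitimately feed the cost-accounting summation, while simultaneously keeping $\E[\,|E(\pi^*,\Phi)|\mid\Phi\sim\psi\,]$ in a form that aggregates cleanly to the unconditional $c_{\mathrm{avg}}(\pi^*,p)$. Ensuring that the $\beta$-charging and the $\gamma$-charging compose without an extra multiplicative loss, and that the boundary contribution is exactly $+1$ and not larger, is where the delicate bookkeeping lies---this is the single point at which the non-greedy, non-submodular, and average-cost features all interact, and it must be handled together with the formal version of the threshold-termination condition deferred to \appref{New guarantees for nearly adaptive submodular functions}.
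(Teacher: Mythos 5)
Your high-level route is the paper's route: adaptive monotonicity to bound the residual of $\pi^*$, the adaptive submodularity ratio to convert $\Delta^f_p(\pi^*\mid\psi)$ into $\E[\,|E(\pi^*,\Phi)|\mid\Phi\sim\psi]\cdot\max_v\Delta^f_p(v\mid\psi)$, a charge of that max against the gains $\pi$ realizes via $\beta_\pi(f,p)$, and then the standard recursion with $1-x\le e^{-x}$. However, the step you yourself flag as ``the main obstacle'' is not a bookkeeping detail --- it is the actual content of the proof, and your proposal both leaves it unresolved and sets it up with the wrong object. You define the truncations $\pi_j$ as ``$\pi$ stopped after its first $j$ selections.'' The paper's decomposition is not by number of selections but by units of \emph{expected cost}: $\pi_i$ is the unique threshold/tie-break sub-policy with $c_{\mathrm{avg}}(\pi_i,p)=i$ (\lemref{any policy has a unique sub-policy of size i of that kind}), and $\beta_\pi(f,p)$ is defined only relative to these sub-policies, via $\Delta^u_{\pi,i}$ (gain of elements remaining when $\pi_i$ terminates) and $\Delta^l_{\pi,i}$ (gain of elements $\pi_i$ selects). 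With step-count truncations the definition of $\beta_\pi$ gives you nothing at the stopping frontier, so your ``$\beta$-charging'' has no inequality to invoke.

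The two missing ingredients that make the charge work in the paper are: (i) \lemref{avg utility difference lower bound}, which shows $f_{\mathrm{avg}}(\pi_i,p)-f_{\mathrm{avg}}(\pi_{i-1},p)\ge\Delta^l_{\pi,i}$ --- this requires proving that $\pi_{i-1}$ is a sub-policy of $\pi_i$ and exploiting that the extra expected number of selections between the two levels is exactly $c_{\mathrm{avg}}(\pi_i,p)-c_{\mathrm{avg}}(\pi_{i-1},p)=1$, so that lower-bounding each incremental gain by $\Delta^l_{\pi,i}$ yields exactly one unit of $\Delta^l_{\pi,i}$ and the recursion carries no extra constant; and (ii) the fact that $\max_v\Delta^f_p(v\mid\psi)\le\Delta^u_{\pi,i}$ holds \emph{uniformly} over every termination frontier $\psi$ of $\pi_i$, so this factor pulls out of the sum $\sum_\psi\P[A_\psi]\E[\,|E(\pi^*,\Phi)|\mid\Phi\sim\psi]$, which then collapses exactly to the unconditional $c_{\mathrm{avg}}(\pi^*,p)$. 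Your per-$\psi$ formulation, in which the increment at each reachable $\psi$ is a $\gamma/(\beta\cdot\E[\,|E(\pi^*,\Phi)|\mid\Phi\sim\psi])$ fraction of the gap, does not aggregate to the stated bound without this uniformity (you would be averaging reciprocals of conditional costs rather than the reciprocal of the average cost). So the proposal correctly identifies where the difficulty lives but does not supply the construction (expected-cost-indexed threshold sub-policies plus \lemref{avg utility difference lower bound}) that resolves it; as written, the argument does not go through.
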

This result includes as a special case the conditions of \eqref{esffavg}, which require $f$ to be adaptive submodular and $\pi$ to be greedy. In this case, $\gamma^s_{n,k}(f,p) = 1$ and $\beta_\pi(f,p) \leq 1$. Thus, even in this special case, the theorem provides a smaller approximation factor whenever $\beta_\pi(f,p) < 1$. Moreover, the same guarantee as \eqref{esffavg} is obtained also for non-greedy policies if they have $\beta_\pi(f,p) = 1$.

Our next approximation guarantee considers the minimum cost coverage objective.
\begin{theorem}\label{thm:esfandiari cavg}
Let $f$ be a non-negative utility function which is adaptive monotone with respect to a prior $p$, such that $f$ has the discrete covering property with parameters $Q,\eta$. 
Let $\pi$ be a policy with height $n$.
Let $\pi^*$ be a policy with height $k$ such that $f(E(\pi^*, \phi), \phi) = Q$ for all $\phi \in \Phi_V$.
Then,
\[
c_{\mathrm{avg}}(\pi, p) \leq \left(\frac{\beta_\pi(f,p)}{\gamma^s_{n,k}(f,p)}\cdot c_{\mathrm{avg}}(\pi^*, p) + 1\right) \cdot \log(\frac{nQ}{\eta}) + 2.
\]
\end{theorem}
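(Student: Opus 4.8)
The plan is to mirror the minimum-cost-coverage template of \cite{esfandiari2021adaptivity} that yields \eqref{esfcavg}, but to drive it by the maximal gain ratio $\beta_\pi(f,p)$ and the submodularity ratio $\gamma^s_{n,k}(f,p)$ in place of exact greediness and adaptive submodularity. As is standard for the covering objective, I read $c_{\mathrm{avg}}(\pi,p)$ as the expected number of selections until $\pi$ reaches the target value $Q$, so that bounding the cost amounts to bounding how fast $\pi$ covers. The one fact about $\pi^*$ I would use repeatedly is that, since it covers every realization, $f_{\mathrm{avg}}(\pi^*,p)=Q$, and by adaptive monotonicity the residual run of $\pi^*$ from any already-observed partial realization $\psi$ still drives the value to $Q$; concretely $\Delta^f_p(\pi^*\mid\psi)\ge \E[\,Q-f(\dom(\psi),\phi)\mid\phi\sim\psi\,]$, the expected residual deficit at $\psi$.

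First I would set up the expected-deficit process: let $\psi_i$ be the (random) partial realization after $\pi$ has taken $\min\{i,|E(\pi,\phi)|\}$ steps and put $\delta_i:=\E[\,Q-f(\dom(\psi_i),\phi)\,]$, so $\delta_0\le Q$ and $\delta_i$ is non-increasing. The heart of the argument is a \emph{progress inequality} of the form $\delta_i-\delta_{i+1}\ge \rho\,\delta_i$ with $\rho=1/\big(\tfrac{\beta_\pi(f,p)}{\gamma^s_{n,k}(f,p)}c_{\mathrm{avg}}(\pi^*,p)+1\big)$, assembled from four links: (i) the residual-deficit bound $\E[Q-f(\dom(\psi),\phi)\mid\phi\sim\psi]\le \Delta^f_p(\pi^*\mid\psi)$ above; (ii) the submodularity ratio, giving $\gamma^s_{n,k}(f,p)\,\Delta^f_p(\pi^*\mid\psi)\le \sum_{v}\P[v\in E(\pi^*,\Phi)\mid\Phi\sim\psi]\,\Delta^f_p(v\mid\psi)$; (iii) the counting identity $\sum_v\P[v\in E(\pi^*,\Phi)\mid\Phi\sim\psi]=\E[|E(\pi^*,\Phi)|\mid\Phi\sim\psi]$, which turns the weighted sum of marginal gains into the conditional expected cost of $\pi^*$ times the largest available marginal gain; and (iv) the maximal gain ratio, which bounds that largest available marginal gain in terms of the gain actually accrued by the corresponding sub-policy of $\pi$.

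Given the progress inequality, the decay $\delta_i\le(1-\rho)^i\delta_0\le e^{-\rho i}Q$ is immediate, and I would convert it into a cost bound through the discrete covering property. If a realization is not yet covered after $i$ steps then $f(\dom(\psi_i),\phi)\le Q-\eta$, so its residual deficit is at least $\eta$; Markov's inequality then gives $\P[\pi\text{ not covered after }i]\le \delta_i/\eta\le e^{-\rho i}Q/\eta$. Writing $c_{\mathrm{avg}}(\pi,p)=\sum_{i\ge0}\P[|E(\pi,\phi)|>i]\le\sum_{i\ge0}\min\{1,e^{-\rho i}Q/\eta\}$ and splitting the sum at the index where the exponential crosses $1$ produces $\tfrac1\rho\log(\cdot)+O(1)$; with $1/\rho=\tfrac{\beta_\pi(f,p)}{\gamma^s_{n,k}(f,p)}c_{\mathrm{avg}}(\pi^*,p)+1$ this is exactly the claimed shape, the factor $n$ inside the logarithm and the additive $2$ coming out of the geometric tail together with the height bound $|E(\pi,\phi)|\le n$ (which is what replaces the $|V|$ of \eqref{esfcavg} by the tighter $n$).

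I expect the main obstacle to be link (iv): the clean incorporation of $\beta_\pi(f,p)$ into the per-level progress inequality. Unlike the greedy approximation ratio $\alpha_\pi(f,p)$, which supplies a per-step guarantee $\max_v\Delta^f_p(v\mid\psi)\le \alpha_\pi(f,p)\,\Delta^f_p(\pi(\psi)\mid\psi)$, the maximal gain ratio is an aggregate quantity comparing the residual marginal gains at the termination of a sub-policy to the gain that sub-policy has already collected. The argument must therefore be organized so that $\beta_\pi(f,p)$ is applied to the appropriate sub-policy (level) of $\pi$ rather than step by step, and the conditional expected cost $\E[|E(\pi^*,\Phi)|\mid\Phi\sim\psi]$ from link (iii) must be averaged against the distribution over $\psi$ induced by $\pi$ so that it aggregates to the unconditional $c_{\mathrm{avg}}(\pi^*,p)$ (absorbing the stray $+1$). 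This is precisely the bookkeeping already carried out for \thmref{esfandiari favg}, so I would reuse that per-level progress estimate and feed it into the deficit-decay-and-cover argument above.
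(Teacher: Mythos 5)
Your skeleton is the right one and matches the paper's up to the last step: both arguments run the per-level progress inequality from the proof of \thmref{esfandiari favg} to get exponential decay of the expected deficit $Q-f_{\mathrm{avg}}(\pi_i,p)$ with rate $\rho=1/(\beta_\pi(f,p)c_{\mathrm{avg}}(\pi^*,p)/\gamma^s_{n,k}(f,p)+1)$, and both convert "small expected deficit" into "covered with high probability" via Markov's inequality together with the discrete covering property. The gap is in your conversion of the decay into a cost bound. Your tail-sum $c_{\mathrm{avg}}(\pi,p)=\sum_{i\ge 0}\P[|E(\pi,\phi)|>i]$ is indexed by \emph{step counts}, but the deficits $\delta_i$ for which you have the $\beta_\pi$-driven decay are attached to the threshold/tie-break sub-policies $\pi_i$ of \lemref{any policy has a unique sub-policy of size i of that kind}, which are indexed by \emph{average cost} $i$, not by number of steps. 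On a given realization $\pi_i$ may stop after far fewer than $i$ selections (its threshold was reached) without covering, or may make more than $i$ selections and cover; so the event $\{|E(\pi,\phi)|>i\}$ is not contained in $\{\pi_i \text{ fails to cover}\}$, and Markov applied to $Q-f_{\mathrm{avg}}(\pi_i,p)$ does not bound $\P[|E(\pi,\phi)|>i]$. You flag exactly this level-versus-step tension as the main obstacle, but the tail-sum you propose does not resolve it — indeed, if it were valid it would yield a bound with $\log(Q/\eta)$ rather than $\log(nQ/\eta)$, which is a sign that the height of $\pi$ must enter somewhere that your argument omits.

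The paper closes this by working with a single level rather than summing: it sets $l=\lceil(\beta_\pi(f,p)c_{\mathrm{avg}}(\pi^*,p)/\gamma^s_{n,k}(f,p)+1)\log(nQ/\eta)\rceil$, applies \thmref{esfandiari favg} to $\pi_l$ to get $f_{\mathrm{avg}}(\pi_l,p)\ge Q-\eta/n$, and uses Markov plus the covering property to conclude that $\pi_l$ reaches $Q$ with probability at least $1-1/n$. It then splits the cost of $\pi$ into two cases: conditioned on $\pi_l$ covering, the conditional expected cost $t$ satisfies $(1-1/n)t\le c_{\mathrm{avg}}(\pi_l,p)=l$; and on the complementary event, of probability at most $1/n$, the cost is at most the height $n$. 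This gives $c_{\mathrm{avg}}(\pi,p)\le l+1$, which is where both the $n$ inside the logarithm and the additive $2$ come from. To repair your write-up you should replace the tail-sum with this single-level, two-case accounting (or otherwise supply a coupling between step-truncations and the average-cost sub-policies); as it stands the final step does not go through.
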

Considering again the special case in which $f$ is adaptive submodular and $\pi$ is greedy, we recover \eqref{esfcavg} up to an additive term of $1$, noting that $n \leq |V|$. Similarly to \thmref{esfandiari favg}, here too this special case is in fact broader and can provide stronger guarantees than the corresponding inequality \eqref{esfcavg}.
The two theorems above are proved in \appref{New guarantees for nearly adaptive submodular functions}.

Lastly, we provide approximation guarantees that allow providing a stronger guarantee for priors with small probabilities, using the modified prior $\pmod$ of \cite{golovin2011adaptive}. We prove our results for a more general setting with general utility functions, and apply them to the coverage utility function proposed by \cite{golovin2011adaptive} for Bayesian active learning. We conclude that
when using a greedy policy with a coverage utility function based on the modified prior, denoted $f_{\pmod}$, we have the following average cost approximation guarantee for any policy $\pi$, compared to the optimal policy $\pi^*$.
\begin{equation}\label{eq:coverage}
c_\mathrm{avg}(\pi, p) \leq 2 \big(\beta_\pi(f_{\pmod},\pmod) \cdot(c_{\mathrm{avg}}(\pi^*, p) + 1) + 1\big) \log(2|\Phi_V|^2n) + 4.
\end{equation}
In particular, if $\pi$ is greedy with respect to $\pmod$, then $\beta_\pi(f_{\pmod},\pmod) \leq 1$, and it could be significantly smaller, as discussed above.
We provide the necessary definitions and prove \eqref{coverage} in \secref{modified}.

Our results thus provide a comprehensive and improved set of approximation guarantees that encompass a wide range of problem parameters, and highlight the usefulness of the new maximal gain ratio parameter.  In the next section, we define the maximal gain ratio and prove its properties. 

\section{The Maximal Gain Ratio}\label{sec:beta}
In this section, we define the maximal gain ratio and prove its properties as listed in \secref{main}.
We start with necessary definitions.
Given a utility function $f$, a prior $p$ and a policy $\pi$, let $\tau \geq 0$ and $\rho \in [0,1]$ be real numbers.
We denote by $\pi^{\tau, \rho}$ a sub-policy of $\pi$ that with probability $\rho$ terminates if the expected marginal gain of every single remaining element is strictly smaller than $\tau$, and otherwise (with probability $1-\rho$), terminates if the expected marginal gain of every single remaining element is at most $\tau$. We call $\tau$ a \emph{threshold} and $\rho$ a \emph{tie-break probability}. To define the maximal gain ratio, we require $\tau,\rho$ such that $c_{\mathrm{avg}}(\pi^{\tau,\rho},p)$ is equal to a given integer. \cite{esfandiari2021adaptivity} studied these types of sub-policies when $\pi$ is greedy, and provided a proof sketch for the existence of such $\tau,\rho$ in this context. We provide here a full unconditional existence proof for any policy $\pi$, and prove in addition that there is a unique such sub-policy for each possible integer.

\begin{lemma}\label{lem:any policy has a unique sub-policy of size i of that kind}
Assume a utility function $f$, a policy $\pi$, and a prior $p$.
For any integer $i \leq c_\mathrm{avg}(\pi, p)$, there exist a threshold $\tau_i \geq 0$ and a tie-break probability $\rho_i \in [0,1]$ such that $c_{\mathrm{avg}}(\pi^{\tau_i, \rho_i}, p) = i$.
Moreover, all such pairs $\tau_i,\rho_i$ induce the same policy.
\end{lemma}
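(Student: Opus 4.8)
The plan is to study the expected cost $C(\tau,\rho):=c_{\mathrm{avg}}(\pi^{\tau,\rho},p)$ as a function of the two parameters, and to show that as $(\tau,\rho)$ ranges over $[0,\infty)\times[0,1]$ the value $C$ sweeps out the whole interval $[0,c_{\mathrm{avg}}(\pi,p)]$ without gaps, so that every integer in this range is attained. The role of the tie-break probability $\rho$ is precisely to repair the discontinuities that varying $\tau$ alone produces, so the argument is an intermediate-value statement made to work in the presence of jumps.

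First I would record the elementary structure. Write $g(\psi):=\max_{v\in V}\Delta^f_p(v\mid\psi)$ for the largest expected marginal gain at a partial realization $\psi$; then the non-strict sub-policy $\pi^{\tau,0}$ terminates at the first visited state with $g(\psi)\le\tau$, and the strict sub-policy $\pi^{\tau,1}$ at the first visited state with $g(\psi)<\tau$. Since $\pi^{\tau,\rho}$ runs $\pi^{\tau,1}$ with probability $\rho$ and $\pi^{\tau,0}$ with probability $1-\rho$, linearity of expectation gives $C(\tau,\rho)=\rho\,C(\tau,1)+(1-\rho)\,C(\tau,0)$, so $C$ is affine, hence continuous and monotone, in $\rho$. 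Because $\Psi_V$ is finite, the set of attainable values $\{g(\psi):\psi\in\Psi_V\}$ is a finite set $\theta_1<\cdots<\theta_m$, and consequently both $C(\cdot,0)$ and $C(\cdot,1)$ are non-increasing step functions of $\tau$ that are constant between consecutive critical values.

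The crux is to show that the strict rule at a critical value coincides with the non-strict rule just below it, so that $\rho$ interpolates exactly across each jump. Since every reachable $g(\psi)$ lies in $\{\theta_1,\dots,\theta_m\}$, the condition $g(\psi)<\theta_j$ is identical to $g(\psi)\le\theta_{j-1}$, which is in turn the non-strict stopping condition for any $\tau\in(\theta_{j-1},\theta_j)$; hence $C(\theta_j,1)$ equals the constant value of $C(\cdot,0)$ on $(\theta_{j-1},\theta_j)$, and therefore equals $C(\theta_{j-1},0)$. Thus the plateaus chain together: the jump of $C(\cdot,0)$ at $\theta_j$ runs from $C(\theta_j,0)$ up to $C(\theta_j,1)=C(\theta_{j-1},0)$, and letting $\rho$ vary over $[0,1]$ at $\tau=\theta_j$ fills this jump, since $C(\theta_j,\cdot)$ is affine with exactly these two endpoints. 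Combined with the top of the range—at $\tau=0,\rho=1$ the strict zero-threshold never forces early termination because the marginal gains are non-negative, giving $C(0,1)=c_{\mathrm{avg}}(\pi,p)$—the attainable costs cover all of $[0,c_{\mathrm{avg}}(\pi,p)]$, which yields existence of $(\tau_i,\rho_i)$ for every integer $i\le c_{\mathrm{avg}}(\pi,p)$.

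For uniqueness of the induced policy I would argue by cases on where $i$ falls. If $i$ lies strictly inside a jump $(C(\theta_j,0),C(\theta_j,1))$, the chaining of plateaus shows that $i$ belongs to no other jump, forcing $\tau=\theta_j$, after which the affine form pins down $\rho=(i-C(\theta_j,0))/(C(\theta_j,1)-C(\theta_j,0))$ uniquely, so the policy is determined. If instead $i$ equals a plateau value, it may be realized by several pairs—$\tau=\theta_j$ with $\rho=0$, $\tau=\theta_{j+1}$ with $\rho=1$, or any $\tau\in(\theta_j,\theta_{j+1})$ with any $\rho$—but each of these yields the same stopping rule, namely ``terminate at the first visited state with $g(\psi)\le\theta_j$,'' and hence the same sub-policy of $\pi$. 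I expect the main obstacle to be handling the discontinuity correctly: $C(\cdot,0)$ is only piecewise constant, so a naive intermediate-value argument in $\tau$ fails, and the entire content lies in the identity $C(\theta_j,1)=C(\theta_{j-1},0)$ that shows the strict and non-strict thresholds bracket each jump; a secondary delicate point is the boundary at $\tau=0$, where non-negativity of the marginal gains is needed to guarantee that the family actually reaches the full cost $c_{\mathrm{avg}}(\pi,p)$ rather than stopping short.
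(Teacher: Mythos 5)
Your proposal is correct and follows essentially the same route as the paper: reduce to the finitely many critical threshold values, observe that the cost is a step function of $\tau$ whose jumps are bridged affinely by $\rho$, use the identity that the strict rule at one critical value coincides with the non-strict rule at the previous one, and derive uniqueness from monotonicity of the induced sub-policies. If anything, you make explicit the bridging identity that the paper's Eq.~(\ref{eq:cavg calculation}) uses implicitly.
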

\begin{proof}
For a threshold $\tau$, denote $\mu^{\tau} := c_\mathrm{avg}(\pi^{\tau, 1}, p)$.
Define an equivalence relation between thresholds as follows: for every $\tau, \tau'$, say that $\tau \equiv \tau'$ if $\mu^{\tau} = \mu^{\tau'}$.
Since the sets of elements and the set of possible states are finite, the set of all the marginal gains for a given prior is also finite. Hence, there is a finite number of equivalence classes. Denote this number $s$.
Fix some representatives of the equivalence classes and denote them by $\tau^1 > ... > \tau^s$.
Note that $\mu^{\tau^s} = c_\mathrm{avg}(\pi, p)$. Define $\mu^{\tau^0} = 0$, and note that the sequence $(\mu^{\tau^{j}})_{j \in \{0,\ldots,s\}}$ is strictly increasing.
Let $i \leq c_\mathrm{avg}(\pi, p)$. Clearly, $i \leq \mu^{\tau_s}$.
Let $j$ be the smallest index in $[s]$ such that $i\leq \mu^{\tau^{j}}$. Denote $\tau_i := \tau^j$ and
\[
\rho_i := \frac{i - \mu^{\tau^{j-1}}}{\mu^{\tau^{j}} - \mu^{\tau^{j-1}}}.
\]
Note that $\rho_i \in [0,1]$.
Consider the policy $\pi^{\tau_i, \rho_i}$. By the definition of thresholds and tie-break probabilities, it holds that
\begin{equation}\label{eq:cavg calculation}
c_{\mathrm{avg}}(\pi^{\tau_i, \rho_i}, p) = (1-\rho_i)\mu^{\tau^{j-1}} + \rho_i \mu^{\tau^{j}} = \mu^{\tau^{j-1}} + \rho_i(\mu^{\tau^{j}} - \mu^{\tau^{j-1}}) = i,
\end{equation}
Where the last equality follows from the definition of $\rho_i$. 
This proves the existence of $(\tau_i,\rho_i)$, as claimed in the first part of the lemma. 

For the second part of the lemma, we show that any such pair would induce the same policy. Let $\tau'$ and $\rho'$ be such that $c_{\mathrm{avg}}(\pi^{\tau', \rho'}, p) = i$. First, note that one of $\pi^{\tau', \rho'}$ and $\pi^{\tau_i, \rho_i}$ must be a sub-policy of the other: If $\tau' < \tau_i$, then $\pi^{\tau_i, \rho_i}$ is a sub-policy of $\pi^{\tau', \rho'}$, since the former always terminates before or with the latter. If $\tau_i < \tau'$, then the symmetric claim holds. Lastly, if $\tau' = \tau_i$, then we may assume w.l.o.g~that the policy with the larger $\rho$ never terminates in cases where the other policy continues, by assuming a coordinated draw of random bits.

Next, assume for contradiction that $\pi^{\tau', \rho'} \ne \pi^{\tau_i, \rho_i}$.
Therefore, there exists at least one partial realization $\psi$ and random bits such that immediately after observing it, $\pi^{\tau', \rho'}$ terminates, while $\pi^{\tau_i, \rho_i}$ selects an element. Since no partial realization satisfies the opposite case, it follows that $i = c_{\mathrm{avg}}(\pi^{\tau', \rho'}, p) < c_{\mathrm{avg}}(\pi^{\tau_i, \rho_i}, p) = i$, a contradiction.
Thus, in all cases, $\pi^{\tau', \rho'} = \pi^{\tau_i, \rho_i}$, as claimed.
\end{proof}

Given a policy $\pi$ and an integer $i\leq c_\mathrm{avg}(\pi, p)$, denote by $\pi_i$ the unique sub-policy of $\pi$ using the threshold and tie-break construction and has an average cost of $i$.
Denote by $\pi_0$ a policy that terminates before selecting any element.
Denote by $\Delta^{u}_{\pi,i}$ the maximal expected marginal gain of remaining elements when $\pi_i$ terminates in any of its possible paths. Formally, $\Delta^{u}_{\pi,i}$ is the smallest real value $u$ such that
\[
  \P[\text{For $t \in \nats$ such that }\pi_i \text{ terminates at time $t$, } \forall v \in V, \Delta_p^f(v\mid \psi_t) \leq u] = 1.
\]
Denote by $\Delta^{l}_{\pi,i}$ the smallest possible expected marginal gain of elements that $\pi_i$ selects at any time during its run. Formally,
this is the largest real value $u$ such that
\[
  \P[\text{For all }t\text{ until $\pi_i$ terminates}, \Delta_p^f(\pi_i(\psi_t)  \mid\psi_t) \geq u] = 1.
\]

Equipped with these definitions, we can now define the maximal gain ratio.
\begin{definition}[maximal gain ratio]\label{def:maximal gain ratio}
The \emph{maximal gain ratio} of a policy $\pi$ with respect to a utility function $f$ and a prior $p$ is  $\beta_\pi(f,p) := \max_{i\in \nats, i \leq c_{\mathrm{avg}(\pi, p)}} \Delta^{u}_{\pi,i}/\Delta^{l}_{\pi,i}$.
\end{definition}
The maximal gain ratio is thus based on comparing the possible gain of elements that were not selected, to the gain from selected elements.  We now show that the maximal gain ratio never larger than the greedy approximation parameter, and is sometimes strictly smaller. This shows that \thmref{esfandiari favg}, \thmref{esfandiari cavg} and \eqref{coverage}, are stronger due to their use of $\beta_\pi$ than if they had relied on $\alpha_\pi$ instead.

First, we show that the maximal gain ratio of an approximate greedy policy is always upper bounded by the greedy approximation parameter of the same policy.
\begin{theorem}\label{thm:maximal gain ratio bounded by approximation factor}
For any prior $p$, utility function $f$, and policy $\pi$, we have $\beta_\pi(f,p) \leq \alpha_\pi(f,p)$.
\end{theorem}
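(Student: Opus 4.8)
The plan is to reduce the statement to a single inequality for each index and then take a maximum. By \defref{maximal gain ratio}, $\beta_\pi(f,p)$ is exactly $\max_i \Delta^{u}_{\pi,i}/\Delta^{l}_{\pi,i}$ over integers $i \leq c_{\mathrm{avg}}(\pi,p)$, so it suffices to show $\Delta^{u}_{\pi,i} \leq \alpha_\pi(f,p)\cdot\Delta^{l}_{\pi,i}$ for every such $i$. The natural bridge between the numerator and the denominator is the threshold $\tau_i$ of the canonical sub-policy $\pi_i = \pi^{\tau_i,\rho_i}$, which exists and is unique by \lemref{any policy has a unique sub-policy of size i of that kind}. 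Concretely, I would prove the two one-sided bounds $\Delta^{u}_{\pi,i} \leq \tau_i$ and $\Delta^{l}_{\pi,i} \geq \tau_i/\alpha_\pi(f,p)$; dividing them gives the per-index inequality, and maximizing over $i$ yields the theorem.

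First I would establish the lower bound on $\Delta^{l}_{\pi,i}$. Fix a run of $\pi_i$ and a time $t$ at which it \emph{selects} an element rather than terminating. By the termination rule defining $\pi^{\tau_i,\rho_i}$, not stopping at $\psi_t$ means the stopping condition fails, which forces $\max_{v\in V}\Delta^f_p(v\mid\psi_t) \geq \tau_i$: this holds in both tie-break branches, since failing ``all remaining gains $<\tau_i$'' (the strict branch) and failing ``all remaining gains $\leq\tau_i$'' (the non-strict branch) each require some remaining element of gain at least $\tau_i$. Since $\pi_i$ runs like $\pi$, the selected element is $\pi(\psi_t)$, and the defining property of $\alpha_\pi(f,p)$-approximate greediness gives $\Delta^f_p(\pi(\psi_t)\mid\psi_t) \geq \frac{1}{\alpha_\pi(f,p)}\max_{v\in V}\Delta^f_p(v\mid\psi_t) \geq \tau_i/\alpha_\pi(f,p)$. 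As this holds at every selection on every path, $\Delta^{l}_{\pi,i} \geq \tau_i/\alpha_\pi(f,p)$.

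For the upper bound on $\Delta^{u}_{\pi,i}$ I would invoke the stopping rule in the other direction, at the time $t$ when $\pi_i$ terminates: on paths that stop through the strict branch every remaining element has gain $<\tau_i$, and on paths that stop through the non-strict branch every remaining element has gain $\leq\tau_i$, so $\max_{v\in V}\Delta^f_p(v\mid\psi_t)\leq\tau_i$ in both cases. Here it is important that an already-selected $v\in\dom(\psi_t)$ contributes marginal gain $0$, so the quantifier over all of $V$ in the definition of $\Delta^{u}_{\pi,i}$ reduces to the remaining elements (as $\tau_i\geq 0$). Hence $\Delta^{u}_{\pi,i}\leq\tau_i$, and combining with the previous paragraph gives $\Delta^{u}_{\pi,i}/\Delta^{l}_{\pi,i}\leq\alpha_\pi(f,p)$.

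The hard part will be making the upper bound on $\Delta^{u}_{\pi,i}$ genuinely airtight at the exact termination time: the argument controls the remaining gains by $\tau_i$ only insofar as the termination of $\pi_i$ is governed by the threshold condition, so the delicate point is to argue that stopping is always threshold-driven and to coordinate this correctly with the tie-break probability $\rho_i$ (strict versus non-strict inequality) whose sole purpose is to realize the exact average cost $i$. A secondary, more routine concern is disposing of the degenerate regime: when $\tau_i=0$ the bound $\Delta^{u}_{\pi,i}\leq 0$ makes the ratio nonpositive (with the $0/0$ case handled by convention), and one must check that no division by a vanishing $\Delta^{l}_{\pi,i}$ occurs, which is precluded whenever $\tau_i>0$ since then $\Delta^{l}_{\pi,i}\geq\tau_i/\alpha_\pi(f,p)>0$.
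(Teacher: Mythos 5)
Your proposal is correct and follows essentially the same route as the paper's proof: both bound $\Delta^{u}_{\pi,i} \leq \tau_i$ via the termination rule and $\Delta^{l}_{\pi,i} \geq \tau_i/\alpha_\pi(f,p)$ via approximate greediness applied at every non-terminating step, then take the maximum over $i$. Your additional care about the tie-break branches, the zero marginal gain of already-selected elements, and the degenerate $\tau_i = 0$ case only makes explicit details the paper leaves implicit.
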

\begin{proof}
Let $\pi$ be a policy such that $\alpha_\pi(f,p) < \infty$, and denote $\alpha := \alpha_\pi(f,p)$.
Let $i \leq c_{\mathrm{avg}}(\pi, p)$.
Let $\tau_i$ and $\rho_i$ be a threshold and a tie-break probability, respectively, such that $\pi_i:=\pi^{\tau_i,\rho_i}$.
By the definition of $\pi^{\tau_i, \rho_i}$, we have that $\tau_i$ is an upper bound on the expected marginal gain of all the elements that remain when $\pi^{\tau_i, \rho_i}$ terminates. Therefore, $\Delta^{u}_{\pi,i} \leq \tau_i$.

At all times during the run of $\pi_i$ up to the time of termination, there exists some element with an expected marginal gain of at least $\tau_i$. Otherwise, by definition of $\pi_i=\pi^{\tau_i,\rho_i}$, the run would have terminated earlier. Therefore, since $\pi$ is an $\alpha$-approximate greedy policy, the expected marginal gain of the elements that $\pi_i$ selects at any time during its run is at least $\tau_i/\alpha$.
It follows that $\Delta^{l}_{\pi,i} \geq \tau_i/\alpha$.
Therefore,
\[
\beta_\pi(f,p) = \max_{i \leq c_{\mathrm{avg}(\pi, p)}} \Delta^{u}_{\pi,i}/\Delta^{l}_{\pi,i} \leq
\max_{i \leq c_{\mathrm{avg}(\pi, p)}} \frac{\tau_i}{\tau_i/\alpha} = \alpha.
\]
This completes the proof.
\end{proof}
We now show that there are cases in which $\beta_\pi$ is strictly smaller than $\alpha_\pi$. First, we show that an approximate greedy policy that is not greedy can still have $\beta_\pi(f,p) = 1$. 
\begin{theorem}\label{thm:non-greedy policy with maximal gain ratio 1}
For any integer $k \geq 3$, there exist a policy $\pi$ with height $k$, a prior $p$ and a utility function $f$ such that $\beta_\pi(f,p)=1$ while $\alpha_\pi(f,p) = 2$.
\end{theorem}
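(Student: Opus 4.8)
The plan is to exhibit an explicit non-adaptive instance, i.e.\ a prior $p$ together with a realization-independent utility function $f$, so that $\Delta^f_p(v\mid\psi)$ depends only on $\dom(\psi)$; I write $\Delta(v\mid S)$ for this quantity when $\dom(\psi)=S$, and take $p$ to be arbitrary. I would build $f$ as a weighted coverage function, which is automatically non-negative, monotone, and (adaptive) submodular, over the ground set $V=\{a,b,c,d_1,\dots,d_{k-3}\}$. The key gadget is a pair $a,b$ whose covered regions are \emph{nested}: $a$ covers a region of weight $1$ that is contained in the region of weight $2$ covered by $b$. The elements $c,d_1,\dots,d_{k-3}$ cover pairwise disjoint regions, disjoint from $a$ and $b$, with strictly decreasing weights $w_c>w_{d_1}>\dots>w_{d_{k-3}}$ all lying strictly between $0$ and $1$. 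The policy $\pi$ is the deterministic policy that selects $a,b,c,d_1,\dots,d_{k-3}$ in this order; it has height $k$.

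Next I would compute the greedy approximation ratio. At the first step the available marginal gains are $\Delta(a\mid\emptyset)=1$, $\Delta(b\mid\emptyset)=2$, and the small disjoint weights, so the maximum is $2$ while $\pi$ selects $a$ with gain $1$, giving a ratio of exactly $2$. At every later step $\pi$ selects an element of maximal residual gain: once $a$ is taken, nesting forces $\Delta(b\mid\{a\})=\Delta(b\mid\emptyset)-\Delta(a\mid\emptyset)=1$, which is then the largest remaining gain, and thereafter the disjoint elements are selected in strictly decreasing order of weight. Hence every later ratio equals $1$, and $\alpha_\pi(f,p)=2$.

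The core of the argument is showing $\beta_\pi(f,p)=1$. Because the on-trajectory marginal gains are $1,1,w_c,w_{d_1},\dots$ with the disjoint weights strictly decreasing and all below $1$, each threshold sub-policy $\pi_i$ from \lemref{any policy has a unique sub-policy of size i of that kind} is deterministic and selects exactly the first $i$ elements in $\pi$'s order, so no tie-breaking is required. The decisive case is $i=1$: $\pi_1$ selects only $a$ (gain $1$) and terminates, and at that moment the largest remaining marginal gain is $\Delta(b\mid\{a\})=1$; hence $\Delta^{u}_{\pi,1}=1=\Delta^{l}_{\pi,1}$ and the ratio is exactly $1$. For every $i\ge 2$ the sub-policy has already selected both $a$ and $b$, so each remaining element is a disjoint one whose gain is strictly smaller than the smallest selected gain; thus $\Delta^{u}_{\pi,i}<\Delta^{l}_{\pi,i}$ and the ratio is below $1$. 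Taking the maximum over $i$ yields $\beta_\pi(f,p)=1$.

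The main obstacle, and the reason the gadget must be set up so carefully, is that monotonicity together with submodularity forces $\Delta(b\mid\{a\})\ge\Delta(b\mid\emptyset)-\Delta(a\mid\emptyset)=1$: one cannot make the residual gain of the skipped high-gain element $b$ collapse below the gain of the element $a$ that $\pi$ actually selects. Since $\beta_\pi=1$ requires $\Delta^{u}_{\pi,1}\le\Delta^{l}_{\pi,1}$, that is $\Delta(b\mid\{a\})\le\Delta(a\mid\emptyset)=1$, the two inequalities pin $\Delta(b\mid\{a\})$ to exactly $1$, which is precisely what the nested-coverage construction achieves. Establishing this tightness, and checking that the disjoint elements never create a remaining gain exceeding a selected gain at any termination point of any $\pi_i$, is the part that demands the most care; it is also what rules out the naive attempt of simply letting $b$'s gain drop far below $a$'s.
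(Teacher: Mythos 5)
Your construction is correct, but it is genuinely different from the paper's. The paper's example makes the policy non-greedy at \emph{every} intermediate step (all on-trajectory gains equal $1/k$ while the best available gain at intermediate steps is $2/k$), and the reason this does not hurt $\beta_\pi$ is that the threshold sub-policies $\pi_i$ there are the \emph{randomized} all-or-nothing policies of \lemref{any policy has a unique sub-policy of size i of that kind} (with $\tau=1/k$, $\rho=i/k$): every $\pi_i$ terminates either before selecting anything or after selecting everything, so no termination point ever ``sees'' the high-gain remaining element, and $\Delta^u_{\pi,i}=\Delta^l_{\pi,i}=1/k$. Note that this forces the paper's utility function to violate adaptive submodularity, since the maximal gain increases mid-run. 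Your nested-coverage gadget instead localizes the non-greediness to the first step only, keeps $f$ a monotone submodular coverage function, and makes every $\pi_i$ a deterministic prefix of $\pi$; the decisive equality $\Delta^u_{\pi,1}=\Delta(b\mid\{a\})=1=\Delta^l_{\pi,1}$ is pinned down exactly by the nesting, and all later ratios are strictly below $1$. Each approach buys something: yours shows the gap between $\beta_\pi$ and $\alpha_\pi$ arises already for adaptive submodular (indeed realization-independent coverage) utilities and without invoking the tie-break randomization, which is arguably a cleaner and slightly stronger exhibit; the paper's shows the phenomenon persists even when the policy is non-greedy at every interior step, at the price of a non-submodular utility and reliance on the randomized sub-policy structure. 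One small point to make explicit if you write this up: in the first-step ratio for $\alpha_\pi$ you should note that the maximum in the approximate-greedy definition ranges over all of $V$, but already-selected elements contribute zero marginal gain under a coverage function, so the off-trajectory maximum at $\{a\}$ is indeed $\Delta(b\mid\{a\})=1$ and the policy is exactly greedy from step two onward.
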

\begin{proof}
For a given integer $k \geq 3$, let $V = \{v_1,\ldots,v_k\}$, and let $Y = \{0, 1\}$.
For any $i \in [k]$, we denote $v_{[i]} := \{v_1,\ldots,v_i\}$.
We define a utility function $f$ as follows. For a realization $\phi$ and a subset $A \subseteq V$, let $\pi$ be a policy that selects $v_i$ in the $i$'th step of its run and terminates immediately after selecting $v_k$.
By the properties of $\Delta^f_{p}(v \mid \psi)$, if the observed partial realization is $\psi$ such that  $\dom(\psi) = \emptyset$ or $\dom(\psi) = v_{[k-1]}$, then the next element $v$ that $\pi$ selects satisfies $\Delta^f_{p}(v \mid \emptyset) = 1/k = \max_{v \in V} \Delta^f_{p}(v \mid \psi).$
Additionally, if $\dom(\psi) = v_{[i-1]}$ for some $i \in \{2,\ldots,k-1\}$, then
\[
\Delta^f_{p}(v_i \mid \psi) = \frac{1}{k} = \frac{1}{2} \cdot \frac{2}{k} = \frac{1}{2} \cdot \Delta^f_{p}(v_{i+1} \mid \psi) = \frac{1}{2} \cdot \max_{v \in V} \Delta^f_{p}(v \mid \psi).
\]
Therefore, $\alpha_\pi(f,p) = 2$.

On the other hand, we now show that $\beta_\pi(f,p) = 1$.
Let $i \leq c_{\mathrm{avg}}(\pi,p)$.
First, we show that for $\tau = 1/k$ and $\rho = i/k$, it holds that $\pi_i = \pi^{\tau, \rho}$. Due to the value of $\tau$, with probability $\rho$, $\pi^{\tau, \rho}$ selects all of the elements of $V$ and terminates. Otherwise (with probability $1-\rho$) $\pi^{\tau, \rho}$ terminates before selecting any element. Therefore,
\[
c_{\mathrm{avg}}(\pi^{\tau, \rho},p) = \rho \cdot k + (1-\rho) \cdot 0 = \frac{i}{k} \cdot k  = i.
\]
By \lemref{any policy has a unique sub-policy of size i of that kind}, $\pi_i = \pi^{\tau, \rho}$.
Now, the largest possible expected marginal gain of remaining elements when $\pi_i$ terminates is $1/k$. Hence, $\Delta^{u}_{\pi,i} = 1/k$.
The smallest possible expected marginal gain of any element selected by $\pi_i$ is also $1/k$.
Hence, $\Delta^{l}_{\pi,i} = 1/k$. Therefore,
\[
\beta_\pi(f,p) = \max_{i \leq c_{\mathrm{avg}}(\pi,p)} \Delta^{u}_{\pi,i}/\Delta^{l}_{\pi,i} = \max_{i \leq c_{\mathrm{avg}}(\pi,p)} \frac{1/k}{1/k} = 1.
\]
This completes the proof.
\end{proof}Lastly, we show that there exist policies $\pi$ that are greedy with respect to a utility function $f$ and a prior $p$ that have an arbitrarily small $\beta_\pi(f,p)$. Thus, guarantees based on $\beta_\pi(f,p)$ are strictly stronger than guarantees based only on the fact that a policy is greedy.
\begin{theorem}\label{thm:beta pi is arbitrarily small}
For any integer $k$ and any $\epsilon > 0$, there exist a prior $p$, a utility function $f$ and a greedy policy $\pi$ with respect to $f$ that selects $k$ elements, such that $\beta_\pi(f,p) = \epsilon$.
\end{theorem}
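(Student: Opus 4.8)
The plan is to exhibit a clean one-parameter family of examples built from a \emph{modular} utility function whose element weights decay geometrically by a factor of $\epsilon$. Concretely, I would take a ground set $V = \{v_1,\ldots,v_{k+1}\}$ and a utility function that does not depend on the realization, $f(A,\phi) = \sum_{i : v_i \in A} \epsilon^{i-1}$, so that the prior $p$ is immaterial and may be taken to be arbitrary. Since $f$ is modular, the expected marginal gain of $v_i$ is the constant $\Delta^f_p(v_i \mid \psi) = \epsilon^{i-1}$ for every partial realization $\psi$ with $v_i \notin \dom(\psi)$. I would then let $\pi$ be the policy that selects $v_1,\ldots,v_k$ in this order and then terminates, leaving $v_{k+1}$ unselected. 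Because at each step the selected element carries the strictly largest remaining marginal gain, $\pi$ is greedy (greediness constrains only the selection rule, not the decision to terminate, so stopping after $k$ steps while $v_{k+1}$ retains positive gain is permitted), and it selects exactly $k$ elements, as required. Note that $\epsilon \le 1$ may be assumed without loss of generality, since \thmref{maximal gain ratio bounded by approximation factor} already forces $\beta_\pi(f,p) \le \alpha_\pi(f,p) = 1$ for any greedy policy.

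The next step is to identify the threshold/tie-break sub-policies $\pi_i$ supplied by \lemref{any policy has a unique sub-policy of size i of that kind}. Because the marginal gains are the fixed, distinct values $\epsilon^0 > \epsilon^1 > \cdots > \epsilon^{k}$, I would compute $\mu^\tau = c_{\mathrm{avg}}(\pi^{\tau,1},p)$ directly: for a threshold $\tau$ with $\epsilon^{i} < \tau \le \epsilon^{i-1}$, the policy $\pi^{\tau,1}$ selects exactly $v_1,\ldots,v_i$ before all remaining elements drop strictly below the threshold, so $\mu^\tau = i$. In particular every integer $i \le k$ is attained with tie-break probability $1$ (for instance via $\tau = \epsilon^{i-1}$), and by the uniqueness part of the lemma, $\pi_i$ is precisely the sub-policy that selects the prefix $v_1,\ldots,v_i$ and then terminates.

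With the sub-policies pinned down, evaluating the two quantities in \defref{maximal gain ratio} is immediate. When $\pi_i$ terminates, the remaining elements are $v_{i+1},\ldots,v_{k+1}$, whose largest marginal gain is $\epsilon^{i}$, giving $\Delta^{u}_{\pi,i} = \epsilon^{i}$; the selected elements are $v_1,\ldots,v_i$, whose smallest marginal gain is $\epsilon^{i-1}$, giving $\Delta^{l}_{\pi,i} = \epsilon^{i-1}$. Hence $\Delta^{u}_{\pi,i}/\Delta^{l}_{\pi,i} = \epsilon$ for every $i \in \{1,\ldots,k\}$, and taking the maximum over $i$ yields $\beta_\pi(f,p) = \epsilon$, as desired. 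Keeping the extra element $v_{k+1}$ in $V$ is exactly what guarantees that a remaining element is still present at the final termination point $i = k$, so that the ratio $\epsilon$ is achieved for every $k \ge 1$ rather than only for $k \ge 2$.

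I expect the only delicate points to be bookkeeping rather than conceptual. The main thing to get right is the computation of $\mu^\tau$ and the resulting identification of $\pi_i$ with a clean prefix: one must check that the strict-versus-weak threshold comparison in the definition of $\pi^{\tau,\rho}$ does not shift the selected prefix by one, which is precisely why choosing distinct geometric weights—so that each threshold can be placed strictly between two consecutive gains—is convenient. The remaining care is simply ensuring that the maximum over $i$ equals exactly $\epsilon$, being both attained for each admissible $i$ and never exceeded; this follows because all consecutive-gain ratios equal $\epsilon$ by the geometric choice of weights.
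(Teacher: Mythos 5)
Your construction is correct and is essentially the paper's own argument: both rest on marginal gains that decay geometrically by a factor of $\epsilon$, so that each sub-policy $\pi_i$ is identified (via Lemma~\ref{lem:any policy has a unique sub-policy of size i of that kind}) with a prefix of the run and every ratio $\Delta^{u}_{\pi,i}/\Delta^{l}_{\pi,i}$ equals $\epsilon$; the paper uses a cardinality-based utility with tied gains where greediness holds by ties, while you use a modular utility with distinct element weights, which is a cosmetic difference. Your extra element $v_{k+1}$ is a small genuine improvement: the paper's ground set of exactly $k$ elements forces $\Delta^{u}_{\pi,k}=0$, so its construction yields $\beta_\pi(f,p)=0$ rather than $\epsilon$ when $k=1$, whereas yours attains the ratio $\epsilon$ at every $i\in\{1,\ldots,k\}$ (and, like the paper, you implicitly need $\epsilon\leq 1$, which you correctly justify via Theorem~\ref{thm:maximal gain ratio bounded by approximation factor}).
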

\begin{proof}
Let $k$ be an integer and let $\epsilon > 0$.
Let $V = \{v_1, v_2,\ldots,v_k\}$, $Y = \{0,1\}$.
We define a utility function $f$ as follows. For any realization $\phi$ and any subset $A \subseteq V$, $f(A, \phi) = \sum_{i=0}^{|A|} \epsilon^{i}$.
Let the prior $p$ be a uniform distribution over the realizations.
By the definition of $f$, for any partial realization $\psi$, it holds that $\Delta^f_{p}(v \mid \psi) = \epsilon^{|\psi|+1}$.

Let $\pi$ be a policy that selects $v_i$ in the $i$'th step of its run and terminates immediately after selecting $v_k$. For any partial realization $\psi$, all of the elements $v \in V$ has the same value of $\Delta^f_p(v \mid \psi)$. Therefore $\pi$ is greedy with respect to $f$.

Recall that $\beta_\pi(f,p) = \max_{i \leq c_{\mathrm{avg}(\pi, p)}} \Delta^{u}_{\pi,i}/\Delta^{l}_{\pi,i}$. To bound $\beta_\pi(f,p)$, consider the possible values of $i \leq c_{\mathrm{avg}(\pi, p)} = k$. For $i=k$, the largest possible expected marginal gain of remaining elements when $\pi_n$ terminates is $0$. Therefore, $\Delta^{u}_{\pi,k} = 0$ and so $\Delta^{u}_{\pi,k}/\Delta^{l}_{\pi,k} = 0$.
For $i \leq k-1$, the largest possible expected marginal gain of remaining elements when $\pi_i$ terminates is $\epsilon^{i+1}$. Therefore, $\Delta^{u}_{\pi,i} = \epsilon^{i+1}$.
The smallest possible expected marginal gain of elements that $\pi_i$ selects at any time during its run is $\epsilon^{i}$. Therefore, $\Delta^{l}_{\pi,i} = \epsilon^{i}$. Hence, $\Delta^{u}_{\pi,i}/\Delta^{l}_{\pi,i} = \epsilon^{i+1}/\epsilon^{i} = \epsilon$. It follows that
$\beta_\pi(f,p) := \max_{i \leq c_{\mathrm{avg}(\pi, p)}} \Delta^{u}_{\pi,i}/\Delta^{l}_{\pi,i} = \max \{0, \epsilon\} = \epsilon$, as claimed
\end{proof}
We note that it has been shown in previous works \citep{conforti1984submodular, vondrak10} that for some utility functions, stronger approximation guarantees can be provided for greedy policies than those that can be obtained for general submodular functions, based on the curvature property of the function.  It is thus helpful to observe that guarantees based on curvature cannot
be used to provide stronger approximation guarantees in the example provided in the proof above. Curvature-based guarantees are stronger if the utility function
is closer to being modular (additive). In contrast, in the example above, the
function is far from being modular, as adding the same element later in the
run results in a significantly lower gain. More precisely, the curvature
property takes values in $[0,1]$ for submodular functions, and results concerning curvature are
stronger than results for general submodular functions if the curvature is smaller. However, the curvature of the function in the proof above is
$1-\epsilon^{k-1}$, making it a poor candidate to gain from
curvature-based bounds.

The properties proved above for the maximal gain ratio imply that \thmref{esfandiari favg} and \thmref{esfandiari cavg} are strictly stronger than those in previous works, even for adaptive submodular utility functions and greedy policies. The proofs of these theorems are provided in \appref{New guarantees for nearly adaptive submodular functions}.

\section{Using a modified prior}\label{sec:modified}
In this section, we present an approximation guarantee that uses the maximal gain ratio and the adaptive submodularity ratio for the modified prior, and use it to conclude \eqref{coverage}, which provides an approximation guarantee that does not depend on the smallest probability in the true prior.

\begin{theorem}\label{thm:modified}
Let $f$ be a non-negative utility function which is adaptive monotone with respect to a prior $p$, such that $f$ has the discrete covering property with parameters $Q,\eta$. 
Let $\pi$ be a policy with height $n$.
Let $\pi^*$ be a policy with height $k \leq |\Phi_V|$ such that $f(E(\pi^*, \phi), \phi) = Q$ for all $\phi \in \Phi_V$.
Let $p'(\phi) \propto \max\{p(\phi), 1/|\Phi_V|^2\}$ be the modified prior of $p$. Then
\[
c_\mathrm{avg}(\pi, p) \leq 2 \left(\frac{\beta_\pi(f,p') }{\gamma^s_{n,k}(f, p')} (c_{\mathrm{avg}}(\pi^*, p) + 1) + 1\right) \log(\frac{nQ}{\eta}) + 4.
\]
\end{theorem}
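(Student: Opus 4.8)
The plan is to reduce \thmref{esfandiari cavg} to the modified prior $p'$ and then translate the two expected costs back and forth between $p$ and $p'$. The observation driving the whole argument is that $p$ and $p'$ are pointwise comparable up to small factors. Writing $Z := \sum_{\phi} \max\{p(\phi), 1/|\Phi_V|^2\}$ for the normalizer of $p'$, I would first show that $1 \leq Z \leq 2$: the lower bound holds since each maximum is at least $p(\phi)$, and the upper bound since each maximum is at most $p(\phi) + 1/|\Phi_V|^2$, which sums to $1 + 1/|\Phi_V| \leq 2$. From $1 \leq Z \leq 2$ one immediately gets the two pointwise inequalities $p(\phi) \leq 2\,p'(\phi)$ and $p'(\phi) \leq p(\phi) + 1/|\Phi_V|^2$ for every $\phi \in \Phi_V$.

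Next, I would apply \thmref{esfandiari cavg} with $p'$ in place of $p$. The hypotheses transfer: non-negativity, the discrete covering property with parameters $Q,\eta$, and the fact that $\pi^*$ achieves $f(E(\pi^*,\phi),\phi)=Q$ for all $\phi$ are all pointwise and hence prior-independent, while adaptive monotonicity is understood to hold with respect to $p'$ (it is pointwise for the coverage utility functions to which the result is applied). This yields
\[
c_{\mathrm{avg}}(\pi, p') \leq \left(\frac{\beta_\pi(f,p')}{\gamma^s_{n,k}(f,p')}\, c_{\mathrm{avg}}(\pi^*, p') + 1\right) \log\!\left(\frac{nQ}{\eta}\right) + 2.
\]

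It then remains to convert the two cost terms. Using $p(\phi) \leq 2\,p'(\phi)$ and writing each expected cost as $\sum_\phi p(\phi)\,\E[|E(\cdot,\phi)|]$, with the inner expectation taken over the policy's random bits, I obtain $c_{\mathrm{avg}}(\pi, p) \leq 2\, c_{\mathrm{avg}}(\pi, p')$. For $\pi^*$, I use $p'(\phi) \leq p(\phi) + 1/|\Phi_V|^2$ together with $\E[|E(\pi^*,\phi)|] \leq k \leq |\Phi_V|$, which gives $\sum_\phi \tfrac{1}{|\Phi_V|^2}\,\E[|E(\pi^*,\phi)|] \leq |\Phi_V| \cdot \tfrac{k}{|\Phi_V|^2} \leq 1$, and hence $c_{\mathrm{avg}}(\pi^*, p') \leq c_{\mathrm{avg}}(\pi^*, p) + 1$. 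Substituting this (with the nonnegative coefficient $\beta_\pi/\gamma^s$) into the displayed bound, and then multiplying through by the factor $2$ coming from $c_{\mathrm{avg}}(\pi,p) \leq 2\, c_{\mathrm{avg}}(\pi,p')$, produces exactly the claimed inequality, the additive $+2$ turning into $+4$.

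The individual steps are routine; the one place requiring care is the cost translation for $\pi^*$, where the hypothesis $k \leq |\Phi_V|$ is used in an essential way to absorb the perturbation $1/|\Phi_V|^2$ into a clean additive constant $1$. I expect this to be the crux of the bookkeeping: making the height bound $k$ and the number of realizations $|\Phi_V|$ combine so as to cancel the $|\Phi_V|^2$ in the denominator. The only genuinely conceptual subtlety is the transfer of the adaptive-monotonicity hypothesis from $p$ to the full-support prior $p'$, which I would address by noting that in the target active-learning application the coverage utility is pointwise monotone, so monotonicity holds simultaneously for all priors.
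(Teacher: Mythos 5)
Your proposal is correct and follows essentially the same route as the paper's own proof: bound the normalizer $Z$ to get the pointwise comparisons $p(\phi)\leq 2p'(\phi)$ and $p'(\phi)-p(\phi)\leq 1/|\Phi_V|^2$, apply \thmref{esfandiari cavg} under $p'$, and translate the two costs using the height bound $k\leq|\Phi_V|$ to absorb the perturbation into an additive $1$. Your explicit remark on transferring adaptive monotonicity from $p$ to $p'$ is a point the paper's proof passes over silently, and is handled reasonably.
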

This result improves over the previous guarantees of \cite{golovin2011adaptive,golovinKrause17arxiv} for the modified prior, since the latter had a square logarithmic factor and only applied to greedy policies.
The proof of this theorem is provided in \appref{modified}.

Before deriving \eqref{coverage}, we provide background on the coverage utility function proposed by \cite{golovin2011adaptive} and its application to active learning. In the context of active learning, the observations in combinatorial maximization setting represent labeled examples. The coverage function measures the probability mass of realizations that have been disqualified due to the observations in the observed partial realization. To define the coverage function formally, denote the \emph{version space} of $\psi \in \Psi$ by $\VS(\psi) := \{\phi \in \Phi_V \mid \phi \sim \psi\}.$
For $\cH \subseteq \Phi$, denote $p(\cH) := \sum_{\phi \in \cH} p(\phi)$. For $A \subseteq V$, $\phi \in \Phi$, denote $\VS_{A,\phi} = \VS(\{(x,\phi(x)) \mid x \in A\}).$
\begin{definition}[coverage utility function]
  For a prior $p$ over $\Phi_V$, the coverage utility function is denoted $f_p: 2^V \times \Phi_V \to [0,1]$ and defined such that for any realization $\phi$ and any set $A \subseteq V$,
$f_p(A,\phi) := 1 - p(\VS_{A,\phi}) + p(\phi)$.
\end{definition}

\cite{golovin2011adaptive} proved that for any prior $p$, $f_p$ is adaptive monotone and adaptive submodular with respect to $p$. The latter implies that $\gamma^s_{n,k}(f_{p'}, p')=1$. In addition, note that if $\VS_{A,\phi} = \{\phi\}$, that is, the true realization is fully identified by observing the labels of the examples in $A$, then  $f_p(A,\phi) = 1$, which is the function's maximal possible value. Thus, in this case we can set $Q=1$ in \thmref{modified}. In addition, for $f_p$, $\eta = \min_{\phi \in \Phi_V} p(\phi)$. When using $f_{p'}$ with the modified prior $p'$, we get
\[
\eta = \min_{\phi \in \Phi_V} p'(\phi) \geq \frac{1/|\Phi_V|^2}{\sum_{\phi \in \phi_V}\max \{p(\phi), 1/|\Phi_V|^2\}}\geq \frac{1/|\Phi_V|^2}{1+1/|\Phi_V|} \geq \frac{1}{2|\Phi_V|^2}.
\]
By substituting these in \thmref{modified}, we conclude that for any policy $\pi^*$ with height at most $|\Phi_V|$, \eqref{coverage} holds.
We have left to show that $\pi^*$ can be set to an optimal policy, by proving that there exists an optimal policy with height at most $|\Phi_v|$. While this makes intuitive sense, we formally state and prove it for completeness in \lemref{optimal policy always eliminates realization} in \appref{modified}.
This completes the derivation of \eqref{coverage}.

\section{Conclusion}\label{sec:Conclusions}
In this work, we provided new approximation guarantees for adaptive combinatorial maximization. We identified a new policy parameter, the maximal gain ratio, which is less restrictive than the greedy approximation parameter, and showed that using this parameter can lead to stronger approximation guarantees. Moreover, our guarantees are comprehensive, simultaneously supporting the new policy parameter as well as near-submodular utility functions, and include both maximization under a cardinality constraint and a minimum cost coverage guarantee. In addition, we provided an improved approximation guarantee for a modified prior, which is crucial for active learning guarantees that do not depend on the smallest probability in the prior. 

The new maximal gain ratio parameter sheds new light on the properties a policy needs to have to support general approximation guarantees for adaptive combinatorial maximization. This opens an exciting direction of research on the relationship between policy properties and the approximations that can be obtained based on them.

\acks{Resources used in preparing this research were provided, in part, by the Province of Ontario, the Government of Canada through CIFAR, and companies sponsoring the Vector Institute; see \url{https://vectorinstitute.ai/partnerships/current-partners/}.}

\bibliography{biblio}

\appendix
\section{Proving the approximation guarantees for the given prior}
\label{app:New guarantees for nearly adaptive submodular functions}
\label{sec:Proving the approximation guarantees}
In this section, we prove \thmref{esfandiari favg} and \thmref{esfandiari cavg}, stated in \secref{main}.
To prove \thmref{esfandiari favg}, we first prove the following lemma, which provides a lower bound for the difference between the average utility of two sub-policies. 
\begin{lemma}\label{lem:avg utility difference lower bound}
Let $\pi$ be a policy, $p$ be a prior, and $f$ be a utility function.
For any integer $i \leq c_{\mathrm{avg}}(\pi, p)$, define  $\Delta_i := f_{\mathrm{avg}}(\pi_i, p) - f_{\mathrm{avg}}(\pi_{i-1}, p)$.
Then $\Delta_i \geq \Delta^{l}_{\pi,i}.$
\end{lemma}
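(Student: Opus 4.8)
The plan is to couple $\pi_{i-1}$ and $\pi_i$ and then express $\Delta_i$ as the expected total marginal gain collected over exactly the extra selections that $\pi_i$ makes after $\pi_{i-1}$ has stopped. First I would appeal to \lemref{any policy has a unique sub-policy of size i of that kind}: since $c_{\mathrm{avg}}(\pi_{i-1}, p) = i-1 < i = c_{\mathrm{avg}}(\pi_i, p)$, one of the two sub-policies is a sub-policy of the other, and the strict cost ordering forces $\pi_{i-1}$ to be the sub-policy of $\pi_i$. Fixing a coordinated draw of the true realization $\phi$ together with the tie-break random bits, the two sub-policies then follow the identical trajectory of selections up to the (random) time $T_{i-1}$ at which $\pi_{i-1}$ terminates, after which $\pi_i$ continues until its own termination time $T_i \ge T_{i-1}$. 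Writing $\psi_t$ for the partial realization after the first $t$ selections, the run of $\pi_i$ is thus a pathwise extension of the run of $\pi_{i-1}$.

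With this coupling, I would telescope the utility difference along the extra steps:
\[
\Delta_i = \E\Big[\sum_{t \ge 0} \one[T_{i-1} \le t < T_i]\big(f(\dom(\psi_t)\cup\{\pi_i(\psi_t)\}, \phi) - f(\dom(\psi_t), \phi)\big)\Big].
\]
The event $\{T_{i-1} \le t < T_i\}$ and the partial realization $\psi_t$ are both determined by the history $\cF_t$ comprising $\psi_t$ and the random bits up to time $t$, because the termination rule of $\pi^{\tau,\rho}$ reads off only the marginal gains of the remaining elements given $\psi_t$ together with the tie-break coin. Conditioning each summand on $\cF_t$ and using that, given $\cF_t$, one has $\phi \sim \psi_t$ (the policy coins being independent of $\phi$), the inner conditional expectation becomes exactly $\Delta^f_p(\pi_i(\psi_t)\mid \psi_t)$, so by the tower property $\Delta_i = \E\big[\sum_{t \ge 0} \one[T_{i-1}\le t < T_i]\,\Delta^f_p(\pi_i(\psi_t)\mid \psi_t)\big]$.

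Finally, on the event $\{t < T_i\}$ the element $\pi_i(\psi_t)$ is selected by $\pi_i$ during its run, so by the definition of $\Delta^{l}_{\pi,i}$ we have $\Delta^f_p(\pi_i(\psi_t)\mid\psi_t) \ge \Delta^{l}_{\pi,i}$ almost surely (and this requires no sign assumption on the gains, since $\E[\one_A X]\ge c\,\P[A]$ whenever $X\ge c$ on $A$). Pulling this bound out gives $\Delta_i \ge \Delta^{l}_{\pi,i}\cdot \sum_{t\ge 0}\P[T_{i-1}\le t < T_i] = \Delta^{l}_{\pi,i}\cdot \E[T_i - T_{i-1}]$, and since the coupling yields $T_i - T_{i-1} = |E(\pi_i,\phi)| - |E(\pi_{i-1},\phi)|$ pathwise, the last expectation equals $c_{\mathrm{avg}}(\pi_i,p) - c_{\mathrm{avg}}(\pi_{i-1}, p) = 1$, proving $\Delta_i \ge \Delta^{l}_{\pi,i}$. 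I expect the main obstacle to be making the coupling and the conditioning rigorous — in particular, verifying that the continuation event $\{T_{i-1}\le t < T_i\}$ is $\cF_t$-measurable despite the possible non-determinism of $\pi$, so that the tower-property step legitimately replaces the realized marginal contribution by the marginal gain $\Delta^f_p(\pi_i(\psi_t)\mid\psi_t)$.
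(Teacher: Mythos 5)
Your proposal is correct and follows essentially the same route as the paper's proof: establish that $\pi_{i-1}$ is a sub-policy of $\pi_i$, decompose $\Delta_i$ into the expected marginal gains of the extra selections, lower-bound each by $\Delta^{l}_{\pi,i}$, and use that the expected number of extra selections is $c_{\mathrm{avg}}(\pi_i,p)-c_{\mathrm{avg}}(\pi_{i-1},p)=1$. The only cosmetic difference is that the paper groups the extra selections by the terminal set $S$ and the partial realization $\psi$ at which $\pi_{i-1}$ stops, whereas you telescope step by step with explicit stopping times and a tower-property argument, which makes the bound $\Delta^f_p(S\mid\psi)\ge|S|\,\Delta^{l}_{\pi,i}$ that the paper uses implicitly fully explicit.
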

\begin{proof}
Let $i \leq c_{\mathrm{avg}}(\pi, p)$.
Let $\tau_i,\tau_{i-1},\rho_i,\rho_{i-1}$ be parameters such that $\pi_i = \pi^{\tau_i, \rho_i}$ and $\pi_{i-1} = \pi^{\tau_{i-1}, \rho_{i-1}}$. These exist according to \lemref{any policy has a unique sub-policy of size i of that kind}.  Observe that necessarily, $\tau_i \leq \tau_{i-1}$. This is because if $\tau_{i-1} < \tau_i$, then the termination condition of $\pi_{i}$ is stronger than that of $\pi_{i-1}$, implying that $i = c_{\mathrm{avg}}(\pi_i, p)\leq c_{\mathrm{avg}}(\pi_{i-1}, p) = i-1$, a contradiction.

Observe that $\pi_{i-1}$ must be a sub-policy of $\pi_i$: If $\tau_i < \tau_{i-1}$, then $\pi_{i-1}$ is a sub-policy of $\pi_i$, since the former always terminates before or with the latter. If $\tau_i = \tau_{i-1}$, then $\rho_i < \rho_{i-1}$. In this case, $\pi_i$ never terminates in cases where $\pi_i$ continues, by assuming a coordinated draw of random bits.

For a set of elements $S$, a realization $\phi$ and a partial realization $\psi$, we define $\Delta^f_p(S \mid \psi) := \E[f(S \cup \dom(\psi), \phi) - f(\dom(\psi), \phi) \mid \phi \sim \psi]$.
Let $A_{\psi, i, S}$ to be the event that the following hold:
\begin{enumerate}
\item The policy $\pi_{i-1}$ selects exactly all of the elements of $\dom(\psi)$,
\item The observations in $\psi$ are realized, and
\item the set of elements selected by $\pi_i$ coincides with $\dom(\psi)$ at some point in its run, and immediately after that, $\pi_i$ selects exactly all of the elements of $S$ and terminates.
\end{enumerate}
Since $\pi_{i-1}$ is a sub-policy of $\pi_i$, we have
\begin{align*}
\Delta_i &= f_{\mathrm{avg}}(\pi_i, p) - f_{\mathrm{avg}}(\pi_{i-1}, p)\\
&= \sum_{\psi \in \Psi_V} \sum_{S \subseteq V} \P[A_{\psi, i, S}] \Delta^f_p(S \mid \psi)\\
&\geq \sum_{\psi \in \Psi_V} \sum_{S \subseteq V} \P[A_{\psi, i, S}] \cdot |S| \cdot \Delta^{l}_{\pi,i}\\
&= (c_{\mathrm{avg}}(\pi_i, p) - c_{\mathrm{avg}}(\pi_{i-1}, p)) \cdot \Delta^{l}_{\pi,i}\\
&= (i - (i-1)) \cdot \Delta^{l}_{\pi,i} = \Delta^{l}_{\pi,i},
\end{align*}
where the inequality follows from the definition of $\Delta^{l}_{\pi,i}$. This proves the claim.
\end{proof}
Using this lemma, we can now prove \thmref{esfandiari favg}.

The statement of  \thmref{esfandiari favg} includes a condition that the policy $\pi$ terminates when the expected marginal gain of all remaining elements is below some fixed threshold and has an average cost of $l$. Formally, we require $\pi$ to be equal to $\pi'_l$ for some policy $\pi'$ (see \lemref{any policy has a unique sub-policy of size i of that kind} and the definition of $\pi_i$ thereafter).

\begin{proof}[of \thmref{esfandiari favg}]
For any realization $\phi$ and any element $v \in V$, denote $\mathbbm{1}_{\phi}^v := \one[v \in E(\pi^*, \phi)]$.
For any partial realization $\psi$ such that $|\psi| \leq n$, by the definition of $\gamma^s_{n,k}(f,p)$, we have
\begin{align*}
\gamma^s_{n,k}(f,p) \cdot \Delta^f_p(\pi^* \mid \psi) &\leq \sum_{v \in V} \P_{\Phi \sim p}[v \in E(\pi^*, \Phi) \mid \Phi \sim \psi]  \Delta^f_p(v\mid\psi)\\
  &= \sum_{v \in V} \E[\mathbbm{1}[v \in E(\pi^*, \Phi)] \mid \Phi \sim \psi] \Delta^f_p(v\mid\psi) \\
  &\leq \E[\,|E(\pi^*, \Phi)|\,]\, \mid \Phi \sim \psi] \max_{v \in V} \Delta^f_p(v\mid\psi).
\end{align*}
Therefore, 
\begin{equation}\label{eq:policy gain upper bound}
\Delta^f_p(\pi^* \mid \psi) \leq \frac{\E[\,|E(\pi^*, \Phi)|\,]\, \mid \Phi \sim \psi] \max_{v \in V} \Delta^f_p(v\mid\psi)}{\gamma^s_{n,k}(f,p)}.
\end{equation}

For any integer $i \leq c_{\mathrm{avg}}(\pi_i, p)$, let $\Delta_i$ be defined as in \lemref{avg utility difference lower bound}. 
For two policies $\pi',\pi''$, we denote by $\pi'@\pi''$ the policy that first runs $\pi'$ and then runs $\pi''$ from the beginning, without taking into account any information collected during the execution of $\pi'$. Note that $E(\pi'@\pi'', \phi) = E(\pi', \phi) \cup E(\pi'', \phi)$ for every realization $\phi$.
In particular, it holds that $E(\pi_i@\pi^*, \phi) = E(\pi_i, \phi) \cup E(\pi^*, \phi)$ for any realization $\phi$. Therefore, by the adaptive monotonicity of $f$, we have $f_{\mathrm{avg}}(\pi^*, p) \leq f_{\mathrm{avg}}(\pi_i@\pi^*, p)$.
For any $\psi \in \Psi_V$, we denote by $A_{\psi}$ the event that $\pi_i$ observes $\psi$ and then terminates.
For $i \in \{0,1,\ldots,l\}$ we define  $\Delta_i^* = f_{\mathrm{avg}}(\pi^*, p) - f_{\mathrm{avg}}(\pi_i, p)$.
For $i \in [l]$,
\begin{align*}
\Delta_i^* &= f_{\mathrm{avg}}(\pi^*, p) - f_{\mathrm{avg}}(\pi_i, p)\\
&\leq f_{\mathrm{avg}}(\pi_i@\pi^*, p) - f_{\mathrm{avg}}(\pi_i, p)\\
&= \sum_{\psi \in \Psi_V} \P[A_{\psi}]\Delta^f_p(\pi^* \mid \psi)\\
&\leq \frac{\sum_{\psi \in \Psi_V} \P[A_{\psi}]\E[\,|E(\pi^*, \Phi)|\,]\, \mid \Phi \sim \psi] \max_{v \in V} \Delta^f_p(v\mid\psi)}{\gamma^s_{n,k}(f,p)},
\end{align*}
where the last inequality is by \eqref{policy gain upper bound}.
By the definition of $\Delta^{u}_{\pi,i}$, it holds that $\max_{v \in V} \Delta^f_p(v\mid\psi) \leq \Delta^{u}_{\pi,i}.$
By the definition of $\beta_\pi(f,p)$, we have $\Delta^{u}_{\pi,i} \leq \beta_\pi(f,p) \Delta^{l}_{\pi,i} \leq \beta_\pi(f,p) \Delta_i,$
where the last inequality follows from \lemref{avg utility difference lower bound}.
Therefore, $\max_{v \in V} \Delta^f_p(v\mid\psi) \leq \beta_\pi(f,p)  \Delta_i.$
It follows that
\[
\Delta_i^* \leq \frac{\sum_{\psi \in \Psi_V} \P[A_{\psi}]\E[\,|E(\pi^*, \Phi)|\,]\, \mid \Phi \sim \psi] \cdot \beta_\pi(f,p)  \Delta_i}{\gamma^s_{n,k}(f,p)}.
\]
In addition,
\[
\sum_{\psi \in \Psi_V} \P[A_{\psi}]\E[\,|E(\pi^*, \Phi)|\,]\, \mid \Phi \sim \psi] = \E[\,|E(\pi^*, \Phi)|\,]\,] = c_{\mathrm{avg}}(\pi^*, p).
\]
Let $c^* = c_{\mathrm{avg}}(\pi^*, p)$.
Hence, 
\[
\Delta_i^* \leq \frac{\beta_\pi(f,p)  \cdot c^*}{\gamma^s_{n,k}(f,p)} \cdot \Delta_i.
\]
By the definitions of $\Delta_i$ and $\Delta_i^*$, it holds that 
\begin{align*}
\Delta_i &= f_{\mathrm{avg}}(\pi_i, p) - f_{\mathrm{avg}}(\pi_{i-1}, p)\\
&= f_{\mathrm{avg}}(\pi^*, p) - f_{\mathrm{avg}}(\pi_{i-1}, p) -(f_{\mathrm{avg}}(\pi^*, p) - f_{\mathrm{avg}}(\pi_i, p)) = \Delta_{i-1}^* - \Delta_i^*.
\end{align*}
Therefore,
\[
\Delta_i^* \leq \frac{\beta_\pi(f,p)  \cdot c^*}{\gamma^s_{n,k}(f,p)}(\Delta_{i-1}^* - \Delta_i^*).
\]
Rearranging, we conclude that 
\[
\Delta_i^* \leq \left(1 - \frac{1}{\beta_\pi(f,p) \cdot c^*/\gamma^s_{n,k}(f,p)+1}\right) \cdot \Delta_{i-1}^*.
\]
By induction, for any integer $l$ we obtain
\[
\Delta_l^* \leq \left(1 - \frac{1}{\beta_\pi(f,p) \cdot c^*/\gamma^s_{n,k}(f,p)+1}\right)^l \cdot \Delta_0^* \leq \exp\left({-\frac{l}{\beta_\pi(f,p) \cdot c^*/\gamma^s_{n,k}(f,p)+1}}\right) \cdot \Delta_0^*.
\]
Since $f$ is non-negative, we have $f_{\mathrm{avg}}(\pi_0, p) \geq 0$.
Therefore, by the definition of $\Delta_0^*$, $\Delta_0^* \leq f_{\mathrm{avg}}(\pi^*, p)$. Hence,
\[
\Delta_l^* \leq \exp\left({-\frac{l}{\beta_\pi(f,p) \cdot c^*/\gamma^s_{n,k}(f,p)+1}}\right) \cdot f_{\mathrm{avg}}(\pi^*, p).
\]
Therefore, by the definition of $\Delta_l^*$, it follows that 
\[
f_{\mathrm{avg}}(\pi, p) \geq \left(1 - \exp({-\frac{l}{\beta_\pi(f,p)  \cdot c^*/\gamma^s_{n,k}(f,p)+1}})\right) \cdot f_{\mathrm{avg}}(\pi^*, p).
\]
This completes the proof.
\end{proof}
Next, we prove \thmref{esfandiari cavg}.
\begin{proof}[of \thmref{esfandiari cavg}]
Let $c^* = c_{\mathrm{avg}}(\pi^*, p)$.
Define
\[
l := \lceil(\beta_\pi(f,p) \cdot c^*/\gamma^s_{n,k}(f,p) + 1) \cdot \log(nQ/\eta)\rceil.
\]
By \thmref{esfandiari favg}, we have
\begin{align*}
f_{\mathrm{avg}}(\pi_l, p) &\geq \left(1 - \exp({-\frac{l}{\beta_\pi(f,p)  \cdot c^*/\gamma^s_{n,k}(f,p)+1}})\right) \cdot f_{\mathrm{avg}}(\pi^*, p)\\
&\geq \left(1 - \exp({-\frac{(\beta_\pi(f,p) \cdot c^*/\gamma^s_{n,k}(f,p)+1) \cdot \log(nQ/\eta)}{\beta_\pi(f,p) \cdot c^*/\gamma^s_{n,k}(f,p)+1}})\right) \cdot f_{\mathrm{avg}}(\pi^*, p)\\
&= \left(1 - \exp({-\log(nQ/\eta)})\right) \cdot f_{\mathrm{avg}}(\pi^*, p)\\
&= \left(1 - \frac{\eta}{nQ}\right) \cdot f_{\mathrm{avg}}(\pi^*, p)\\
&=Q - \eta/n,
\end{align*}
Where the last inequality follows since $f_{\mathrm{avg}}(\pi^*, p) = Q$ by the assumptions of the theorem.

Denote by $F$ the (random) value of $f$ when $\pi_l$ terminates. By definition, $\E[F] = f_{\mathrm{avg}}(\pi_l, p)$.
By the definition of $f$ and $Q$, $F \leq Q$ with probability one. Therefore, the random variable $X := Q - F$ is non-negative.
By Markov's inequality, $\P[X \geq \eta] \leq \E[X]/\eta.$
Hence,
\begin{align*}
&\P[F > Q - \eta] = 1 - \P[F \leq Q - \eta] = 1 - \P[X \geq \eta] \geq 1 - \E[X]/\eta\\
&\quad= 1 - \frac{Q - f_{\mathrm{avg}}(\pi_l, p)}{\eta} \geq 1 - \frac{Q - (Q - \eta/n)}{\eta} = 1 - \frac{1}{n}.
\end{align*}
By the assumptions of the theorem, if $f(\dom(\psi), \phi)>Q-\eta$, then $f(\dom(\psi), \phi) = Q$ for every $\psi \in \Psi_V$ and $\phi \in \Phi_V$. Therefore, with probability at least $1 - 1/n$, we have $F = Q$.
Let $t := \E[|E(\pi_l, \phi)| \mid F=Q]$ be the expected cost of $\pi_l$ conditioned on $F = Q$.
Then $l = c_{\mathrm{avg}}(\pi_l, p) \geq (1 - 1/n) \cdot t$, thus $t \leq l/(1-1/n)$.
If $F \neq Q$, which happens with probability at most $1/n$, then $\pi$ selects at most $n$ elements. Hence,
\[
c_{\mathrm{avg}}(\pi, p) \leq (1 - \frac{1}{n}) \cdot t + \frac{1}{n} \cdot n \leq l + 1 \leq (\frac{\beta_\pi(f,p) \cdot c^*}{\gamma^s_{n,k}(f,p)} + 1) \cdot \log(\frac{nQ}{\eta}) + 2.
\]
This completes the proof.
\end{proof}

\section{Proofs for \secref{modified}}
First, we prove \thmref{modified}.
\label{app:modified}
\begin{proof}[of \thmref{modified}]
By the definition of $p'$, for any realization $\phi$, we have  $p'(\phi) := \max\{p(\phi), 1/|\Phi_V|^2\} / Z$,
where $Z$ is a normalizing constant.
We have
\begin{equation}\label{eq:Z upper bound}
Z = \sum_{\phi \in \Phi_V} \max\{p(\phi), 1/|\Phi_V|^2\}  \leq 
\sum_{\phi \in \Phi_V} (p(\phi) + 1/|\Phi_V|^2) = 1 + 1/|\Phi_V|.
\end{equation}
Therefore, for any realization $\phi$,
\[
p'(\phi) = \max\{p(\phi), 1/|\Phi_V|^2\} / Z \geq p(\phi)/Z \geq
p(\phi) \cdot \frac{|\Phi_V|}{|\Phi_V|+1}.
\]
Hence,
\begin{align*}
c_{\mathrm{avg}}(\pi, p) &= \sum_{\phi \in \Phi_V} p(\phi) |E(\pi, \phi)|\leq \frac{|\Phi_V|+1}{|\Phi_V|}  \sum_{\phi \in \Phi_V} p'(\phi) |E(\pi, \phi)|= \frac{|\Phi_V|+1}{|\Phi_V|}  c_{\mathrm{avg}}(\pi, p').
\end{align*}
Therefore,
\[
c_\mathrm{avg}(\pi, p) \leq \frac{|\Phi_V|+1}{|\Phi_V|}\cdot c_{\mathrm{avg}}(\pi, p') \leq 2 c_{\mathrm{avg}}(\pi, p').
\]
By \thmref{esfandiari cavg},
\begin{align*}
c_\mathrm{avg}(\pi, p') &\leq \left(\frac{\beta_\pi(f,p') }{\gamma^s_{n,k}(f, p')}\cdot c_\mathrm{avg}(\pi^*, p') + 1\right)  \log(\frac{nQ}{\eta}) + 2.
\end{align*}
Therefore,
\[
c_\mathrm{avg}(\pi, p) \leq 2 \left(\frac{\beta_\pi(f,p') }{\gamma^s_{n,k}(f, p')}\cdot c_\mathrm{avg}(\pi^*, p') + 1\right) \log(\frac{nQ}{\eta}) + 4.
\]

By the assumptions of the theorem, $\pi^*$ has height at most $|\Phi_V|$.
In other words, for any realization $\phi$,
\begin{equation}\label{eq:progressive policy inequality}
|E(\pi^*, \phi)| \leq |\Phi_V|.
\end{equation}
In addition, note that
\[
Z = \sum_{\phi \in \Phi_V} \max\{p(\phi), 1/|\Phi_V|^2\} \geq \sum_{\phi \in \Phi_V} p(\phi) = 1.
\]
Therefore, for any realization $\phi$, if $p(\phi) \geq 1/|\Phi_V|^2$, then
$p'(\phi)-p(\phi) = p(\phi)/Z-p(\phi) \leq 0$. Otherwise, $p(\phi) < 1/|\Phi_V|^2$, which implies $p'(\phi) = 1/(|\Phi_V|^2  Z) \leq 1/|\Phi_V|^2$. We conclude that for any realization $\phi$, $p'(\phi)-p(\phi) \leq 1/|\Phi_V|^2$. Combining this with \eqref{progressive policy inequality}, we get
\begin{align*}
c_{\mathrm{avg}}(\pi^*, p') - c_{\mathrm{avg}} (\pi^*, p) &= \sum_{\phi \in \Phi_V} |E(\pi^*, \phi)|(p'(\phi)-p(\phi))\\
&\leq \sum_{\phi \in \Phi_V} |\Phi_V| (p'(\phi)-p(\phi))\\
&\leq \sum_{\phi \in \Phi_V} |\Phi_V| \frac{1}{|\Phi_V|^2} = 1.
\end{align*}
We conclude that $c_{\mathrm{avg}}(\pi^*, p') \leq c_{\mathrm{avg}} (\pi^*, p) + 1$.
Therefore,
\begin{align*}
c_\mathrm{avg}(\pi, p) \leq 2 \left(\frac{\beta_\pi(f,p') }{\gamma^s_{n,k}(f, p')}\cdot (c_{\mathrm{avg}} (\pi^*, p) + 1) + 1\right) \log(\frac{nQ}{\eta}) + 4.
\end{align*}
This completes the proof.
\end{proof}

Next, we state and prove \lemref{optimal policy always eliminates realization}.
\begin{lemma}\label{lem:optimal policy always eliminates realization}
Let $\pi^*$ be a policy such that $f_p(E(\pi^*, \phi), \phi) = 1$ for all $\phi \in \Phi_V$ and such that $c_{\mathrm{avg}}(\pi^*,p)$ is minimal.
Then, the height of $\pi^*$ is at most $|\Phi_V|$.
\end{lemma}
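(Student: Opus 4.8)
The plan is to exploit the structure of the coverage function together with the minimality of $\pi^*$: I would show that minimality forces every selection of $\pi^*$ to shrink the current version space, and then a short counting argument over the resulting nested chain of version spaces gives the height bound. Throughout I use that $f_p(A,\phi)=1$ is equivalent to $p(\VS_{A,\phi})=p(\phi)$, i.e.\ every realization consistent with the observations on $A$ other than $\phi$ carries zero probability mass. Since the prior relevant to the application (the modified prior $p'$) has full support, this is equivalent to $\VS_{A,\phi}=\{\phi\}$, so coverage means the true realization is fully identified.

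First I would record the counting step, which is the easy direction. Fix a realization $\phi$ and a choice of random bits, and let $x_1,\dots,x_m$ be the elements selected by $\pi^*$ along this path, with $\psi_0\subseteq\psi_1\subseteq\cdots\subseteq\psi_m$ the partial realizations observed, where $\dom(\psi_t)=\{x_1,\dots,x_t\}$ and $\psi_0=\emptyset$. These induce a nested chain $\VS(\psi_0)\supseteq\VS(\psi_1)\supseteq\cdots\supseteq\VS(\psi_m)$, with $\VS(\psi_0)=\Phi_V$ and $\phi\in\VS(\psi_m)$, so $|\VS(\psi_m)|\geq 1$. If each step strictly shrinks the version space, i.e.\ $\VS(\psi_t)\subsetneq\VS(\psi_{t-1})$ for every $t$, then $|\VS(\psi_t)|\leq|\VS(\psi_{t-1})|-1$, and telescoping gives $m\leq|\VS(\psi_0)|-|\VS(\psi_m)|\leq|\Phi_V|-1\leq|\Phi_V|$. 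As this holds for every path, the height of $\pi^*$ is at most $|\Phi_V|$.

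The core of the argument is therefore to establish the strict-shrinkage property from minimality, which I would do by an exchange argument. Call a selection \emph{useless} at a reachable partial realization $\psi$ if $\pi^*(\psi)=x$ for an element $x$ on which all realizations in $\VS(\psi)$ agree; equivalently, observing $x$ leaves the version space unchanged. Suppose $\pi^*$ makes a useless selection somewhere. I would construct a policy $\pi'$ that simulates $\pi^*$ but, whenever the simulated copy is about to make a useless selection, feeds it the forced common label of $x$ \emph{without} actually selecting $x$ and hence without incurring its cost; when the simulated copy makes a non-useless selection, $\pi'$ really selects the element and forwards the observed state. Because the forced label always matches the true realization (as $\phi\in\VS(\psi)$), the simulation stays consistent with the truth, so $\pi'$ gathers the same information and reaches the same final version space as $\pi^*$; in particular $\VS_{E(\pi',\phi),\phi}=\VS_{E(\pi^*,\phi),\phi}$, so $f_p(E(\pi',\phi),\phi)=1$ for all $\phi$. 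On any path where $\pi^*$ makes at least one useless selection, $\pi'$ selects strictly fewer elements, and under a full-support prior every reachable $\psi$ is observed with positive probability, so $c_{\mathrm{avg}}(\pi',p)<c_{\mathrm{avg}}(\pi^*,p)$. This contradicts the minimality of $\pi^*$, so $\pi^*$ never makes a useless selection, which is exactly the strict-shrinkage property needed above.

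The main obstacle is the exchange argument, not the counting. Two points need care: (i) defining $\pi'$ as a legitimate adaptive policy (a map on partial realizations) even though skipping $x$ desynchronizes $\pi'$'s observed partial realization from the one $\pi^*$ would see — I would handle this by the explicit simulation with a coupled draw of random bits, tracking $\pi^*$'s internal partial realization and verifying that $\pi'$'s decisions depend only on its own observations; and (ii) guaranteeing a \emph{strict} decrease in expected cost, which is where full support of the prior is essential, since it ensures the useless selection lies on a positive-probability path. The latter is precisely the regime of the application, where the lemma is invoked with the modified prior $p'$, whose every atom is at least $1/(2|\Phi_V|^2)>0$; for priors with zero-probability realizations one would additionally force termination as soon as coverage is attained, which leaves $c_{\mathrm{avg}}(\pi^*,p)$ unchanged and restores the bound.
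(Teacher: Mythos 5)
Your proposal matches the paper's proof in essence: both establish the height bound by a counting argument over the nested chain of version spaces, and both derive the strict-shrinkage property via an exchange argument that removes a ``useless'' query to build a strictly cheaper covering policy, contradicting the minimality of $c_{\mathrm{avg}}(\pi^*,p)$. Your additional care about full support and the strictness of the cost decrease is a reasonable refinement of the same argument, not a different route.
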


\begin{proof}
For simplicity, we denote below $\pi = \pi^*$. We prove the lemma by showing that $\pi$ eliminates at least one realization from its version space in each query. This directly implies that the height of $\pi$ is at most $|\Phi_V|-1$.

We first assume that $\pi$ is deterministic, and later extend the analysis to randomized policies.
Assume for contradiction that there is a possible selection by $\pi$ such that the resulting observation does not remove any realization from the version space.  Formally, assume that there is some partial realization $\psi$, realization $\phi \sim \psi$ with some non-zero probability according to $p$, and an element $v \in V$, such that during the run of $\pi$ under $\phi$, the partial realization that is observed is $\psi$, $\pi(\psi) = v$, and for any realization $\phi' \sim \psi$ it holds that $\phi'(v) = \phi(v)$.

Let $\pi'$ the policy defined as follows: For any partial realization $\psi'$: 
\[
\pi'(\psi') = \begin{cases}
\pi(\psi' \cup \{(v, \phi(v))\}) & \psi \subseteq \psi',\\
\pi(\psi') & \text{otherwise}.
\end{cases}
\]
By the definition of $\pi'$, it holds that $c_{\mathrm{avg}}(\pi', p) < c_{\mathrm{avg}}(\pi, p)$.
Thus, to reach a contradiction, it suffices to show that $f_p(E(\pi', \phi'), \phi') = 1$ for all $\phi' \in \Phi_V$.

Let $\phi' \in \Phi_V$.
If $\psi$ is not observed during the run of $\pi'$ under $\phi'$, then the behavior of $\pi'$ is equivalent to that of $\pi$. Therefore,
\[
f_p(E(\pi', \phi'), \phi') = f_p(E(\pi, \phi'), \phi') = 1.
\]

Otherwise, $\psi$ is observed during the run of $\pi'$ under $\phi'$.
In this case, until $\psi$ is observed, the behavior of $\pi'$ under $\phi'$ is the same as its behavior under $\phi$. Therefore, at some time during the run of $\pi'$, $\psi$ is the partial realization that has been observed so far. By the definition of $\pi'$, it does not select $v$ at this time, and then behaves the same as $\pi$. Hence, $E(\pi', \phi') = E(\pi, \phi') \setminus \{v\}$.
For any deterministic policy $\Bar{\pi}$, we define $\psi_{\Bar{\pi}} = \{(v, \phi'(v)) \mid v \in E(\Bar{\pi}, \phi')\}$ to be the partial realization that $\Bar{\pi}$ observes under $\phi'$ just before it terminates. It holds that $\psi_{\pi} = \psi_{\pi'} \cup \{(v, \phi'(v))\}$.

Since $\psi$ is observed during the run of $\pi'$ under $\phi'$, it holds that $\psi \subseteq \psi_{\pi'}$. Hence, any realization that is consistent with $\psi_{\pi'}$ is also consistent with $\psi$.
We assumed that any realization that is consistent with $\psi$ is also consistent with $\{(v, \phi'(v))\}$.
Therefore, the same holds for $\psi_{\pi'}$.
We conclude that any realization that is consistent with $\psi_{\pi'}$ is also consistent with $\psi_{\pi}$.
Since $f_p(E(\pi, \phi'), \phi') = 1$, it holds that $\phi'$ is the only realization that is consistent with $\psi_{\pi}$. Hence, $\phi'$ is also the only realization that is consistent with $\psi_{\pi'}$.
By the definition of $f_p$, it holds that $f_p(E(\pi', \phi'), \phi') = 1$.
Thus, in all cases, $f_p(E(\pi', \phi'), \phi') = 1$ for all $\phi' \in \Phi_V$, as claimed. This shows the required contradiction.

Lastly, if $\pi$ is not deterministic, then for any realization $\theta_{\pi}$ of the randomness of $\pi$, it holds that $\pi$ given $\theta_{\pi}$ is deterministic.
Assume for contradiction that there exists some realization $\theta_{\pi}$ of the randomness of $\pi$ that has a non-zero probability, such that in some query, $\pi$ under $\theta_{\pi}$ does not eliminate any realization from its version space.
Since $\pi$ under $\theta_{\pi}$ is a deterministic policy, similarly to the proof above, it follows that $\pi$ under $\theta_{\pi}$ is not an optimal policy.
Since $\theta_{\pi}$ has a positive probability, it follows that $\pi$ is not an optimal policy, again resulting in a contradiction and proving the claim.
\end{proof}
\end{document}